\documentclass{lcs2}

\usepackage{inconsolata}

\usepackage{graphicx}

\PassOptionsToPackage{numbers, compress}{natbib}
\PassOptionsToPackage{dvipsnames, table}{xcolor}

\usepackage{hyperref}       
\usepackage{url}            
\usepackage{booktabs}       
\usepackage{amsfonts}       
\usepackage{nicefrac}       
\usepackage{microtype}      
\usepackage{xcolor}         
\usepackage{microtype}
\usepackage{graphicx}
\usepackage{subcaption}
\usepackage{enumitem}
\usepackage{makecell}
\usepackage{booktabs} 
\usepackage{wrapfig}

\usepackage{hyperref}
\usepackage{amsmath}
\usepackage{amssymb}
\usepackage{mathtools}
\usepackage{amsthm}
\usepackage{booktabs}
\usepackage{multirow}
\usepackage{graphicx}
\usepackage{xcolor}
\usepackage{colortbl}
\usepackage{arydshln}
\usepackage{enumitem}
\usepackage{minitoc}
\usepackage[ruled,vlined, linesnumbered]{algorithm2e}

\setlist[itemize]{noitemsep, topsep=0pt}

\definecolor{blockblue}{RGB}{235, 245, 251}   
\definecolor{blockred}{RGB}{253, 237, 236}    
\definecolor{blockgreen}{RGB}{233, 247, 239}  
\definecolor{blockorange}{RGB}{254, 245, 231} 

\usepackage[capitalize,noabbrev]{cleveref}

\theoremstyle{plain}
\newtheorem{theorem}{Theorem}[section]
\newtheorem{proposition}[theorem]{Proposition}

\theoremstyle{definition}

\newtheorem{assumption}[theorem]{Assumption}
\theoremstyle{remark}

\newcommand{\loss}{\texttt{CaRE-Divergence}}
\newcommand{\method}{\texttt{CaRE-KD}}
\newcommand{\revivalname}{\texttt{Revival}}

\newcommand{\rev}[1]{{\color{black}#1}}

\usepackage[textsize=tiny]{todonotes}

\paperwebsite{https://parmanu.lcs2.in}  

\papertitle{Distilling What Matters: Confidence-Aware Selective Distillation for Large Language Models}

\papershortauthors{A. Sengupta and V. Seth and T. Chakraborty} 
\paperauthors{%
  Ayan Sengupta \textsuperscript{\tiny 1} \quad 
  Vaibhav Seth \textsuperscript{\tiny 1} \quad
  Tanmoy Chakraborty \textsuperscript{\tiny {1,2}}%
} 

\paperaffil{\textsuperscript{1} Department of Electrical Engineering, Indian Institute of Technology Delhi \\ \textsuperscript{2} Yardi School of Artificial Intelligence, Indian Institute of Technology Delhi} 

\paperemails{{\ttfamily{ayan.sengupta@ee.iitd.ac.in, vaibhavseth2283@gmail.com, tanchak@iitd.ac.in}}} 

\paperkeywords{Knowledge distillation, efficient language models, on-policy distillation} 

\paperabstract{ 
  Knowledge Distillation (KD) trains a smaller-capacity student model to imitate a larger-capacity teacher model by matching output distributions, implicitly assuming the teacher to be a reliable oracle. In large language models (LLMs), this assumption often fails: teacher predictions can exhibit high entropy and hallucinations, causing standard KD to degrade well-calibrated student priors. We propose \method, a confidence-gated distillation framework that replaces static objectives with uncertainty-adaptive optimization. \method\ has two components: a token-level loss (\loss) that adaptively switches between Forward and Reverse KL divergence based on teacher--student confidence, and a batch-level epistemic rejection mechanism (\revivalname) that suppresses updates when the teacher is more uncertain than the student. We provide a gradient-level analysis showing how this dual-granularity design induces a conditional calibration mechanism that prior static divergences cannot reproduce. Empirically, across eight teacher--student pairs and eleven benchmarks spanning instruction following, chat alignment, code generation, and mathematical reasoning, \method\ delivers consistent gains over strong baselines (Skewed-KL, $\alpha$--$\beta$ divergence). Highlights include up to $+3.2$ average ROUGE-L on instruction-following tasks, $+2.1$ pass@1 on MBPP, $+1.7$ accuracy on GSM8k, and $+1.8$ accuracy on CollegeMath over the strongest baseline, with consistent gains in LLM-as-a-judge factuality (up to $+2.5$ per task over Skewed-RKL). \revivalname\ further acts as a principled, loss-agnostic plug-in that systematically strengthens existing distillation objectives by filtering epistemically unreliable teacher supervision.
} 

\appendixtocon 
\appendixtocname{Structure of the Appendix}   
\begin{document}

\makelabtitle

\section{Introduction and Prior Art}
\label{sec:intro}

Knowledge Distillation (KD) is a standard paradigm for transferring knowledge from a large teacher model to a small student model by aligning their predictive distributions, most commonly via the Forward Kullback--Leibler (KL) divergence \citep{kim2016sequence, agarwal2024policy}. This formulation implicitly assumes that the teacher provides uniformly reliable supervision, encouraging the student to cover the teacher's full probability mass. While effective when the teacher is well calibrated, this assumption becomes fragile in the regime of large language models (LLMs). Despite their strong average performance, modern LLMs frequently exhibit high-entropy predictions, internal inconsistency, and hallucinations \citep{kalai2025language, sun2025large}. In such cases, minimizing Forward KL forces the student to allocate probability mass to unreliable long-tail predictions, a failure mode often described as zero-forcing \citep{gu2024minillm}, which degrades confident and correct student priors.

\begin{figure*}[htb]
    \centering
    \subfloat[]{\includegraphics[width=0.33\linewidth]{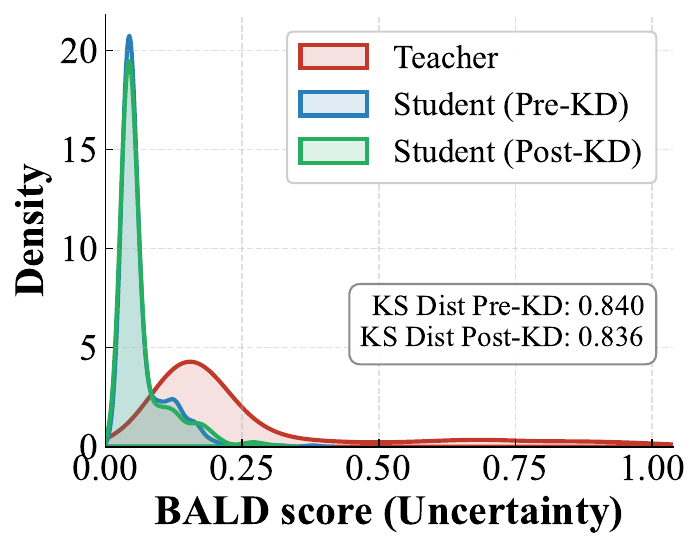}}
    \subfloat[]{\includegraphics[width=0.33\linewidth]{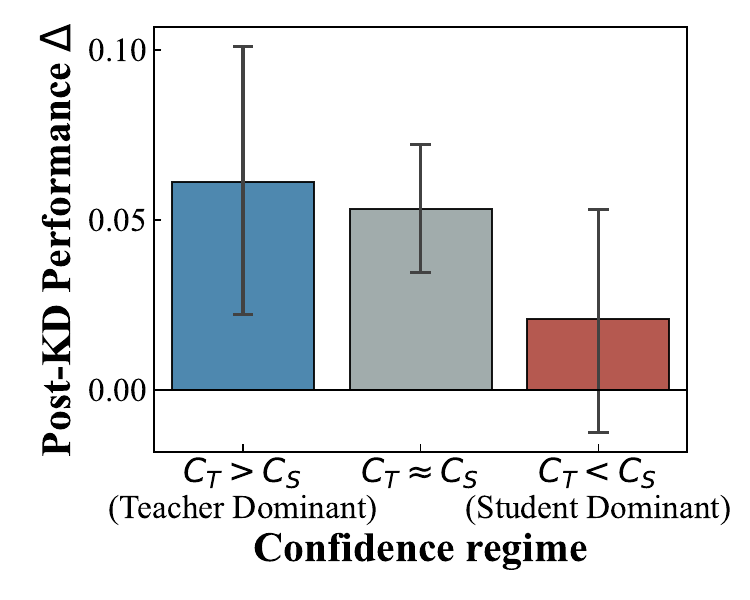}}
    \subfloat[]{\includegraphics[width=0.33\linewidth]{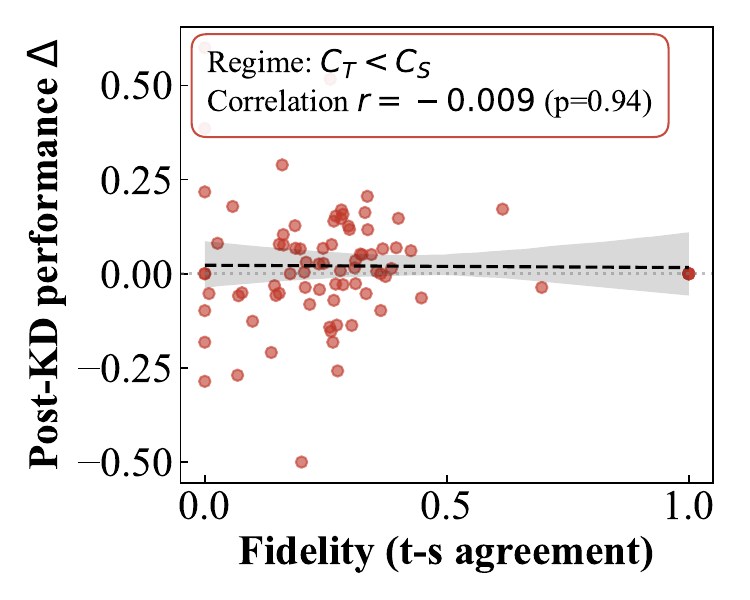}}
    \caption{OpenLLaMA-3B student distilled from a 7B teacher on Dolly-15k with Skewed-RKL \citep{ko2024distillm}. (a) Distillation fails to reduce student epistemic uncertainty when the teacher is uncertain (KS distance changes by only $0.4\%$). (b) Teacher supervision helps when the teacher is confident (median $+4\%$) but barely helps when the teacher is uncertain ($+0.5\%$). (c) Static objectives enforce agreement even with unreliable teachers.}
    \label{fig:motivation}
\end{figure*}

Prior work has attempted to mitigate this issue by modifying the distillation objective -- mode-seeking alternatives such as Reverse KL \citep{gu2024minillm}, as well as static interpolations, including Skewed KL \citep{ko2024distillm} and $\alpha$--$\beta$ divergence \citep{wangabkd}, reduce sensitivity to the teacher's tail by globally biasing optimization toward precision. However, these approaches impose a fixed geometric preference throughout training. They fail to account for the inherently dynamic nature of autoregressive generation, in which a teacher may be highly informative for some tokens while uncertain or hallucinating for others.

Our empirical analysis (Figure~\ref{fig:motivation}) reveals a recurrent failure mode we call the \emph{Fidelity Trap} \citep{stanton2021does, ramesh-etal-2025-generalization}: teacher supervision helps when the teacher is accurate (median $+4\%$) but collapses to $+0.5\%$ when the teacher is uncertain about its own generation \citep{kadavath2022languagemodelsmostlyknow, prato-etal-2024-large}, even though the student often has \emph{lower} pre-distillation uncertainty than the teacher in such regimes. Standard KD nevertheless forces the student to match the teacher, synchronizing with unreliable supervision and overwriting correct priors.

We propose \method, with two complementary components. \loss\ is a token-level objective that dynamically switches between Forward KL (mean-seeking, when the teacher is confident) and Reverse KL (mode-seeking, when the teacher is uncertain). \revivalname\ is a batch-level rejection mechanism based on BALD \citep{houlsby2011bayesian} that suppresses updates when the teacher is epistemically less reliable than the student, preventing unlearning of correct priors.

Empirically, \method\ achieves consistent improvements across eight teacher--student pairs and eleven benchmarks spanning instruction following, chat alignment, code generation, and mathematical reasoning (Tables~\ref{tab:main_results}, \ref{tab:divergence_comparison}, \ref{tab:revival_delta}, and~\ref{tab:additional_results}). Concrete highlights include a $+2.6$ average ROUGE-L gain on GPT2-base over Forward KL, $+3.2$ on OPT-125M, $+2.1$ pass@1 on MBPP, $+1.7$ accuracy on GSM8k, and consistent gains in LLM-as-a-judge factuality (up to $+2.5$ per task over Skewed-RKL). Moreover, \revivalname\ acts as a general enhancer: when applied to existing divergence objectives, including Forward KL, Reverse KL, $\alpha$--$\beta$ divergence, and Skewed RKL, it systematically improves performance by selectively filtering unreliable supervision (Table~\ref{tab:revival_delta}). Together, these results demonstrate that selective adaptation of optimization geometry and selective rejection of supervision are both necessary for robust distillation from modern LLMs.
Unlike prior work that modifies divergence globally or filters data at the dataset level, our framework adapts both the optimization geometry at the token level and supervision at the sequence level during training.

Our contributions can be summarized as follows:
\begin{itemize}[leftmargin=*, itemsep=0pt, parsep=0pt, topsep=0pt]
    \item We introduce \loss, a confidence-gated distillation objective that dynamically adapts divergence geometry at the token level.
    \item We propose \revivalname, a sequence-level epistemic rejection mechanism that filters distillation updates when the teacher is epistemically less reliable than the student, preventing student calibration from collapsing under unreliable supervision.
    \item We provide a gradient-level analysis explaining \emph{how} \loss\ induces conditional calibration and \emph{why} no static divergence can reproduce its token-wise behavior, together with formal conditions under which entropy-based confidence is a valid proxy for BALD.
    \item We demonstrate consistent empirical gains across instruction following, chat alignment, code generation, and mathematical reasoning, complemented by an LLM-as-a-judge factuality evaluation that directly measures hallucination, and release a unified evaluation framework for reproducibility.\footnote{The supplementary material contains source code for all divergence objectives implemented in this paper.}
\end{itemize}


\noindent \textbf{Prior Art.}
Knowledge Distillation has evolved from logit matching and sequence-level objectives \citep{hinton2015distilling, kim2016sequence} to generative distillation frameworks that address exposure bias via on-policy sampling \citep{agarwal2024policy, sengupta2024a} and mitigate tail risks \citep{gu2024minillm}. Despite these advances, robustly handling noisy or hallucinated supervision remains an open challenge \citep{deepseekai2025deepseekr1incentivizingreasoningcapability, ramesh-etal-2025-generalization, fang2024bayesian}. While prior work has explored static reweighting of divergence geometry \citep{ko2024distillm, wangabkd}, proxy-based uncertainty modeling \citep{fang2024bayesian}, or dataset-level filtering \citep{he2025dakd}, these approaches either introduce additional modeling overhead or completely discard potentially informative data. In contrast, our method directly modulates the optimization trajectory, leveraging divergence geometry and intrinsic uncertainty signals to adapt supervision at training time without auxiliary models or data pruning. A more detailed discussion of related work is provided in Appendix~\ref{sec:related}.

\section{Methodology}
\label{sec:methodology}

\begin{wrapfigure}{r}{0.55\textwidth} 
    \centering
    \includegraphics[width=\linewidth]{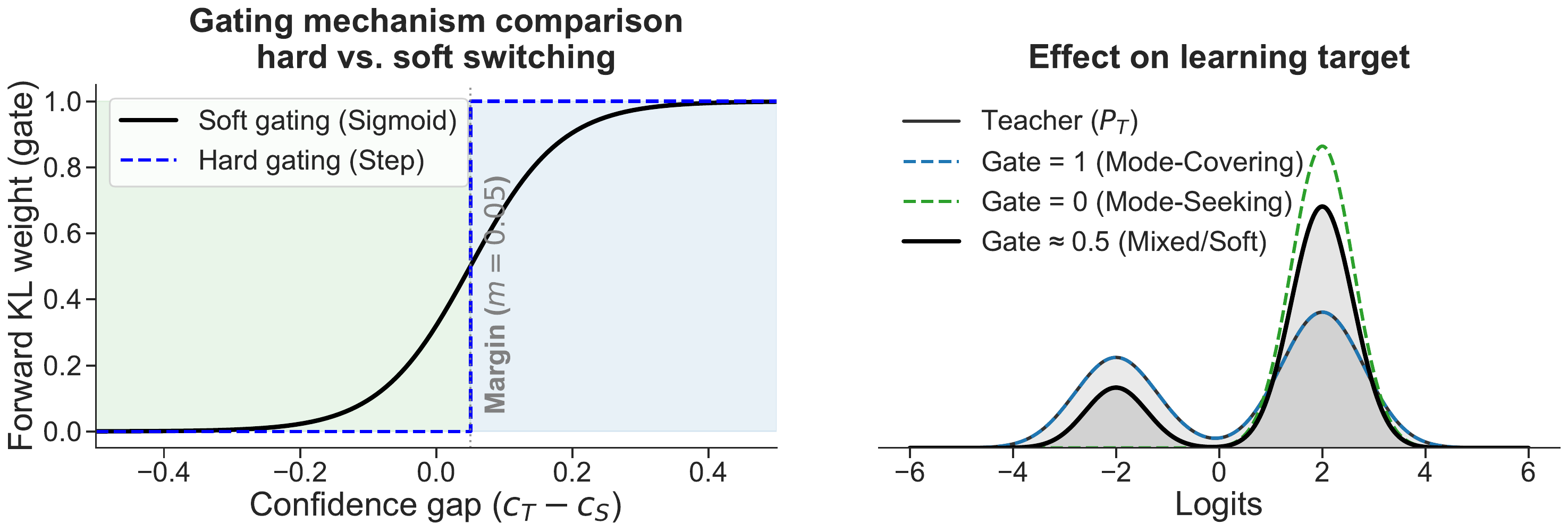}
    \caption{\textbf{Token-level geometric adaptation induced by \loss.} When the teacher is confident, the objective behaves as Forward KL (mean-seeking, recall-oriented); when the teacher is uncertain, it transitions to Reverse KL (mode-seeking, precision-oriented), suppressing tail noise.}
    \label{fig:method}
\end{wrapfigure}

\method\ is a dual-granularity distillation framework with two complementary mechanisms: (i) a token-level geometric adaptation, \loss\ (\S\ref{subsec:care_loss}), that switches between Forward and Reverse KL using single-pass confidence, and (ii) a sequence-level rejection rule, \revivalname\ (\S\ref{subsec:revival}), that suppresses updates when the teacher is epistemically less reliable than the student.

\noindent \textbf{Notation.} Let $\mathcal{V}$ denote the vocabulary, and let $p(\cdot\mid\mathbf{x}), q(\cdot\mid\mathbf{x}) \in \Delta^{|\mathcal{V}|-1}$ be the teacher and student next-token distributions (optionally tempered by $T$). We measure single-pass (aleatoric) confidence by the normalized-entropy score
\begin{equation}
\label{eq:conf_score}
c(p) \triangleq 1 - H(p)/\log|\mathcal{V}| \in [0,1],
\end{equation}
where $H(\cdot)$ is Shannon entropy; larger $c(p)$ corresponds to a sharper distribution.

\subsection{Token-Level Dynamics: \loss}
\label{subsec:care_loss}

Since the appropriate optimization geometry depends on the relative informativeness of the teacher (Figure~\ref{fig:method}), we replace fixed Forward KL with the confidence-gated combination
\begin{equation}
\label{eq:care_objective}
\mathcal{L}_{\mathrm{CARE}}(p, q) = g \cdot \mathrm{KL}(p \| q) + (1 - g) \cdot \mathrm{KL}(q \| p),
\end{equation}
where $g\in[0,1]$ is a per-token scalar gate driven by the confidence gap $\Delta c = c(p)-c(q)$. We consider two variants:
\begin{equation}
g_{\mathrm{hard}} = \mathbb{I}[c(p) \ge c(q)],\qquad g_{\mathrm{soft}} = \sigma\!\big(c(p) - c(q) - m\big),
\label{eq:gates}
\end{equation}
with $\sigma(\cdot)$ the sigmoid and $m$ a margin. When the teacher is confident ($g\to1$), \loss\ approaches mean-seeking $\mathrm{KL}(p\|q)$ and maximizes support coverage; when the teacher is uncertain ($g\to0$), it approaches mode-seeking $\mathrm{KL}(q\|p)$, suppressing tail noise.

\subsection{Batch-Level Rejection: \revivalname}
\label{subsec:revival}

Single-pass entropy cannot distinguish intrinsic ambiguity from hallucination-driven epistemic uncertainty. We therefore introduce \revivalname, a sequence-level rejection mechanism based on the BALD score \citep{houlsby2011bayesian} computed via Monte Carlo dropout \citep{gal2016dropout}:
\begin{equation}
\label{eq:bald_score}
\mathcal{I}_{\mathrm{BALD}}(\mathbf{x}) = H\!\left[\mathbb{E}_{\omega} p(y\mid\mathbf{x},\omega)\right] - \mathbb{E}_{\omega}\!\left[H\!\left(p(y\mid\mathbf{x},\omega)\right)\right],
\end{equation}
which measures predictive disagreement across stochastic parameter realizations. Let $\mathcal{I}_{\mathrm{T}}$ and $\mathcal{I}_{\mathrm{S}}$ denote teacher and student BALD scores, respectively at a sequence; the rejection mask is
\begin{equation}
\label{eq:revival_mask}
M_{\mathrm{Revival}} = \mathbb{I}\!\left[\mathcal{I}_{\mathrm{T}} > Q_{\tau}(\mathcal{I}_{\mathrm{T}}) \;\land\; \mathcal{I}_{\mathrm{S}} < Q_{\tau}(\mathcal{I}_{\mathrm{S}})\right],
\end{equation}
where $Q_\tau(\cdot)$ is the running $\tau$-quantile of historical BALD scores (computed separately for teacher and student). Using a relative percentile rather than a fixed threshold ensures the mask is scale-invariant under training-time drift in BALD magnitudes, and triggers \emph{only} when the teacher is uncertain \emph{relative} to a confident student, targeting unreliable supervision rather than uniformly discarding hard tokens. We optionally schedule $\tau$ across training (Appendix~\ref{app:exp_details}).


For a batch $B$, the final per-batch objective is
\begin{equation}
\label{eq:final_loss}
\mathcal{L}_{\mathrm{CaRE\text{-}KD}} = (1-M_{\mathrm{Revival}}) \cdot \frac{1}{|B|}\sum_{\mathbf{x}\in B}\mathcal{L}_{\mathrm{CARE}}\big(p(\mathbf{x}), q(\mathbf{x})\big).
\end{equation}
When $M_{\mathrm{Revival}}=1$, updates are suppressed; otherwise the student optimizes the gated objective.

\section{Theoretical Results}
\label{sec:theory}

We provide a gradient-level analysis of \loss\ that explains \emph{how} the gated objective differs from Skewed KL \citep{ko2024distillm,kodistillm} and $\alpha$--$\beta$ divergence \citep{wangabkd}, and we formalize when entropy-based confidence proxies BALD. Our results are mechanistic explanations of the empirical behavior, not standalone optimization guarantees. All proofs are in Appendix~\ref{app:proofs}.

\subsection{Gradient analysis of \loss}

\begin{proposition}[Gradient decomposition of \loss]
\label{thm:grad_decomp}
Let $\mathcal{L}_{\mathrm{FKL}}(\theta)=\mathrm{KL}(p\|q_\theta)$ and $\mathcal{L}_{\mathrm{RKL}}(\theta)=\mathrm{KL}(q_\theta\|p)$ with $p$ fixed in $\theta$, and let $g=g(\theta)$ be a differentiable gate. Then
\begin{multline}
\label{eq:grad_decomp}
\nabla_\theta \mathcal{L}_{\mathrm{CaRE}}
=
\underbrace{\Big[g \nabla_\theta \mathcal{L}_{\mathrm{FKL}} + (1-g)\nabla_\theta \mathcal{L}_{\mathrm{RKL}}\Big]}_{\text{Learning signal}}
+
\underbrace{\Big[(\mathcal{L}_{\mathrm{FKL}}-\mathcal{L}_{\mathrm{RKL}})\nabla_\theta g\Big]}_{\text{Selection signal}}.
\end{multline}
For the soft gate $g=\sigma(c(p)-c(q_\theta)-m)$, $\nabla_\theta g = -g(1-g)\nabla_\theta c(q_\theta)$, so the selection signal equals $-g(1-g)(\mathcal{L}_{\mathrm{FKL}}-\mathcal{L}_{\mathrm{RKL}})\nabla_\theta c(q_\theta)$.
\end{proposition}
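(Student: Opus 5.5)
The argument is essentially the product rule applied to the definition of $\mathcal{L}_{\mathrm{CaRE}}$ in \eqref{eq:care_objective}, followed by the chain rule for the soft gate in \eqref{eq:gates}. I would rewrite the objective as
\[
\mathcal{L}_{\mathrm{CaRE}}(\theta)=g(\theta)\,\mathcal{L}_{\mathrm{FKL}}(\theta)+\bigl(1-g(\theta)\bigr)\mathcal{L}_{\mathrm{RKL}}(\theta).
\]
Here $\theta$ enters through both the divergences and the gate, while $p$ is held fixed. The teacher is frozen, so $c(p)$ is a constant in $\theta$.

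First I would differentiate each product term. For the first term,
\[
\nabla_\theta\bigl(g\mathcal{L}_{\mathrm{FKL}}\bigr)=g\nabla_\theta\mathcal{L}_{\mathrm{FKL}}+\mathcal{L}_{\mathrm{FKL}}\nabla_\theta g .
\]
For the second term,
\[
\nabla_\theta\bigl((1-g)\mathcal{L}_{\mathrm{RKL}}\bigr)=(1-g)\nabla_\theta\mathcal{L}_{\mathrm{RKL}}-\mathcal{L}_{\mathrm{RKL}}\nabla_\theta g .
\]
Summing these and grouping the terms containing $\nabla_\theta g$ gives \eqref{eq:grad_decomp} directly. The first bracket is the learning signal and the second is $(\mathcal{L}_{\mathrm{FKL}}-\mathcal{L}_{\mathrm{RKL}})\nabla_\theta g$.

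This step needs every term to be differentiable. I would therefore assume $q_\theta$ has full support and depends smoothly on $\theta$, as it does under a softmax parametrization. Then $\log q_\theta$ and both KL terms are smooth, and $p>0$ wherever needed for $\mathrm{KL}(q\|p)$ to be finite. The argument also needs $g$ to be differentiable, which is assumed in the statement. The hard gate is excluded, since its derivative vanishes almost everywhere and is undefined on the switching set.

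For the soft gate, write $s(\theta)=c(p)-c(q_\theta)-m$. The chain rule together with $\sigma'=\sigma(1-\sigma)$ gives
\[
\nabla_\theta g=\sigma'(s)\nabla_\theta s=g(1-g)\bigl(-\nabla_\theta c(q_\theta)\bigr),
\]
because $c(p)$ and $m$ do not depend on $\theta$. Substituting this into the selection term yields the claimed expression $-g(1-g)(\mathcal{L}_{\mathrm{FKL}}-\mathcal{L}_{\mathrm{RKL}})\nabla_\theta c(q_\theta)$.

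For completeness I would note that
\[
\nabla_\theta c(q_\theta)=-\nabla_\theta H(q_\theta)/\log|\mathcal{V}|
\]
is well defined under the same smoothness assumptions. There is no substantive obstacle here. The only point requiring care is stating precisely that the teacher's confidence is treated as a $\theta$-independent quantity and that the gate is differentiable. If a stop-gradient were placed on $g$, as is common in practice, the selection signal would vanish and only the learning signal would remain.
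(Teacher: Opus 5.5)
Your proposal is correct and matches the paper's proof essentially line for line: the product rule on $g\,\mathcal{L}_{\mathrm{FKL}}+(1-g)\,\mathcal{L}_{\mathrm{RKL}}$ with the $\nabla_\theta g$ terms grouped, then the chain rule with $\sigma'=\sigma(1-\sigma)$ and $c(p)$, $m$ constant in $\theta$ to obtain $\nabla_\theta g=-g(1-g)\nabla_\theta c(q_\theta)$. Your added remarks on smoothness and support, and on the selection term vanishing under a stop-gradient, also mirror the paper's stated scope (a detached gate gives $\nabla_\theta g\equiv 0$).
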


\noindent \textbf{Interpretation.} When the teacher is comparatively uncertain ($\mathcal{L}_{\mathrm{FKL}}>\mathcal{L}_{\mathrm{RKL}}$), the selection signal aligns with $+\nabla_\theta c(q_\theta)$, explicitly pushing the student to \emph{increase} its confidence. No fixed-skew or fixed-$\beta$ objective produces such a confidence-dependent term. \rev{This selection term is present only for the differentiable soft gate: the hard gate $g_{\mathrm{hard}}$ is a detached step function with $\nabla_\theta g\equiv0$, so it retains only the token-level branch selection (the learning term with per-token $g_t\in\{0,1\}$, which Proposition~\ref{thm:skewed_rkl_relation} shows no static divergence can reproduce), while soft gating additionally carries the confidence-pushing selection term. Branch selection alone yields the strongest peak performance, whereas the selection term confers robustness (\Cref{app:proof_grad}).} When $M_{\mathrm{Revival}}=1$, $\nabla_\theta\mathcal{L}_{\mathrm{CaRE\text{-}KD}}=0$, so the divergence to any reference $p^\star$ is unchanged -- \revivalname\ provides worst-case robustness against unreliable supervision (proof in Appendix~\ref{app:proofs}).

\subsection{Strict separation from SRKL and $\alpha$--$\beta$ divergence}

\begin{proposition}[No global skew or $\beta$ matches \loss]
\label{thm:skewed_rkl_relation}
In the student-dominant regime ($c(q)>c(p)$, $g=0$), the per-token \loss\ gradient matches that of Skewed Reverse KL in the limit $\lambda\to 1$. Globally, however, \loss\ assigns a token-varying $g_t\in\{0,1\}$ (and equivalently a token-varying $\beta_t\in\{0,1\}$ in the $\alpha$--$\beta$ family). No global $\lambda$ for SRKL or global $\beta$ for $\alpha$--$\beta$ divergence reproduces this token-wise behavior. (Full $\alpha$--$\beta$ derivation: Appendix~\ref{app:proofs}.)
\end{proposition}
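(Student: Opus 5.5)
We will prove the two halves of Proposition~\ref{thm:skewed_rkl_relation} separately: a local limit statement in the student-dominant regime, and a global separation argument.

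\textbf{Local part.} We fix the Skewed Reverse KL of \citet{ko2024distillm}, $\mathrm{SRKL}_\lambda(p,q_\theta)=\mathrm{KL}\big(q_\theta\,\|\,\lambda q_\theta+(1-\lambda)p\big)$. In this parametrization $\lambda\to0$ recovers $\mathrm{KL}(q_\theta\|p)$; we will align the convention so that the stated limit corresponds to the pure-RKL endpoint, i.e. the mixture weight on $p$ tends to one. We then use Proposition~\ref{thm:grad_decomp}. In the regime $c(q)>c(p)$ with the hard gate, $g\equiv0$ and $\nabla_\theta g\equiv0$ on that token. For the soft gate we take the saturated limit $g\to0$, where $g(1-g)\to0$. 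In both cases the selection signal vanishes and $\nabla_\theta\mathcal{L}_{\mathrm{CaRE}}=\nabla_\theta\mathrm{KL}(q_\theta\|p)$.

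Next we differentiate $\mathrm{SRKL}$ explicitly:
\[
\nabla_\theta\,\mathrm{SRKL}=\sum_y \nabla_\theta q_\theta(y)\Big[\log\tfrac{q_\theta(y)}{r(y)}+1-\tfrac{\lambda' q_\theta(y)}{r(y)}\Big],
\]
where $r$ is the mixture and $\lambda'$ is the weight it places on $q_\theta$. As the skew tends to the pure-RKL endpoint, $\lambda'\to0$ and $r\to p$. The bracket then converges to $\log(q_\theta/p)+1$, which is exactly the RKL gradient integrand. The convergence is uniform whenever $p$ has full support, which tempered softmax outputs do. This gives the first claim.

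\textbf{Global part.} Write $\mathcal{L}_{\mathrm{CaRE}}=\sum_t[g_t\mathrm{FKL}_t+(1-g_t)\mathrm{RKL}_t]$. We construct a two-token instance:
\begin{itemize}
\item token $1$ with $c(p_1)>c(q_1)$, so $g_1=1$ and the token gradient is the pure FKL gradient;
\item token $2$ with $c(q_2)>c(p_2)$, so $g_2=0$ and the token gradient is the pure RKL gradient.
\end{itemize}
Suppose some global $\lambda$ reproduced the per-token gradient at every token and for every $(p,q)$. Matching token $2$ forces $\lambda$ to the RKL endpoint by the local part. Matching token $1$ requires the SRKL gradient to equal the FKL gradient. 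That is impossible: SRKL is a reverse-direction divergence in $q$, and even at the opposite skew endpoint it degenerates to zero rather than to FKL. We exhibit this concretely with a Bernoulli pair $p=(a,1-a)$, $q=(b,1-b)$ and compare the logit derivatives:
\begin{itemize}
\item FKL gives $b-a$;
\item RKL gives $b(1-b)\big[\log\tfrac{b}{a}-\log\tfrac{1-b}{1-a}\big]$.
\end{itemize}
These differ for generic $a,b$, so no single $\lambda$ can equal FKL on token $1$ and RKL on token $2$.

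For the $\alpha$--$\beta$ family of \citet{wangabkd}, we use its known special cases: at fixed $\alpha$, the endpoints $\beta=1$ and $\beta=0$ recover (scaled) FKL and RKL respectively. The mixture $\mathcal{L}_{\mathrm{CaRE}}$ therefore corresponds to a token-indexed choice $\beta_t=g_t$. A single global $\beta$ would have to equal both $1$ and $0$, which is a contradiction by the same Bernoulli witness. We also rule out an intermediate $\beta$ that happens to match both gradients, since the gradient in $\beta$ is nonconstant on the witness.

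\textbf{Main obstacle.} The delicate step is handling the $\alpha$--$\beta$ divergence. Its endpoint reductions involve limits, since $\log$ terms arise as $\alpha,\beta\to0$, and normalizations in which the divergence is only proportional to KL. We must therefore state precisely that \emph{reproduce} means equality up to a positive token-independent scale. Otherwise a token-dependent rescaling could masquerade as agreement. We will settle this by checking the ratio of the two gradients on the Bernoulli witness and showing that it is not constant in $(a,b)$.
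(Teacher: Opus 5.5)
Your local part is the paper's argument. The paper lets $\lambda\to1$ in its SRKL logit gradient with $m=\lambda p+(1-\lambda)q$, which is your computation under $\lambda'=1-\lambda$. For the global part, the paper only asserts that $\sum_t\nabla_z\mathrm{KL}(q_t\|m_t)$ cannot equal the gated sum unless $g_t$ is constant, and the $\alpha$--$\beta$ derivation it points to is not in the appendix. An explicit witness is therefore the right idea, but yours, as written, excludes only the two endpoint skews.

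The step ``matching token 2 forces $\lambda$ to the RKL endpoint by the local part'' uses the converse of what you proved. The local part shows that the RKL endpoint gives the RKL gradient, not that no interior skew does. Your ``degenerates to zero'' remark covers only the other endpoint, and the Bernoulli witness compares only FKL with RKL. So an interior $\lambda$ that agrees with $\nabla\mathrm{FKL}$ on $U_1=\{c(p)>c(q)\}$ and with $\nabla\mathrm{RKL}$ on $U_2=\{c(q)>c(p)\}$ is never excluded. Your own witness closes this. Expanding at $b=a$ gives $\partial_z\mathrm{SRKL}_\lambda=\lambda^2(b-a)+O((b-a)^2)$ in the paper's convention, while $\partial_z\mathrm{FKL}=b-a$ and $\partial_z\mathrm{RKL}=(b-a)+O((b-a)^2)$. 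Hence every $\lambda<1$ fails at first order on both regions (take $b$ slightly closer to, resp.\ farther from, $\tfrac12$ than $a$). The case $\lambda=1$ fails on $U_1$ by your witness, e.g.\ $a=0.9$, $b=0.6$ gives $-0.3$ versus about $-0.43$.

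The $\alpha$--$\beta$ part has an outright error as well as the same gap. In the AB-divergence of ABKD, FKL is $(\alpha,\beta)=(1,0)$ and RKL is $(0,1)$; at fixed $\alpha=1$, $\beta=1$ gives $\tfrac12\sum_k(p_k-q_k)^2$, not a KL. So $\beta_t\in\{0,1\}$ only makes sense along $\alpha=1-\beta$, where $\beta_t=1-g_t$, not $g_t$. Also, nonconstancy of the gradient in $\beta$ does not exclude an intermediate $\beta$ that matches FKL on $U_1$ and RKL on $U_2$. A first-order expansion will not help here either, since all members of the line $\alpha+\beta=1$ share the Fisher quadratic form at $q=p$.

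A uniform fix covers both families. For any fixed parameter value, the per-token logit gradient $F(p,q)$ is real-analytic on the connected open set $\mathrm{int}\,\Delta\times\mathrm{int}\,\Delta$, and $U_1,U_2$ are nonempty open subsets. Suppose $sF=\nabla\mathrm{FKL}$ on $U_1$ and $sF=\nabla\mathrm{RKL}$ on $U_2$ for a single $s>0$. The identity theorem then gives $\nabla\mathrm{FKL}\equiv sF\equiv\nabla\mathrm{RKL}$ everywhere, contradicting your Bernoulli witness. If you allow different scales on the two regions, you get $\nabla\mathrm{RKL}\equiv\kappa\,\nabla\mathrm{FKL}$, which is exactly what your planned non-constant-ratio check refutes. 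This argument also makes the ``forces the endpoint'' step unnecessary.
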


\subsection{Entropy-confidence proxies BALD in the sharp regime}

\begin{assumption}[Sharpness Hypothesis]
\label{ass:sharpness}
LLMs trained with cross-entropy loss are typically overconfident \citep{sun2025large,pawitan2025confidence}: $H[p(y|\mathbf{x}, \omega)] \approx 0$ for $\omega\sim p(\omega)$.
\end{assumption}

\begin{theorem}[Conditional equivalence of confidence proxies]
\label{thm:conf_equiv}
Under Assumption~\ref{ass:sharpness} and the approximation $\hat p \approx \mathbb{E}_\omega[p(y\mid\mathbf{x},\omega)]$, maximizing $c(\hat p)$ approximately minimizes $\mathcal{I}_{\mathrm{BALD}}(\mathbf{x})$. Hence single-pass entropy is a computationally efficient proxy for BALD inside the inner loop.
\end{theorem}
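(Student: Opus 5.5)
The plan is to start from the BALD decomposition in \eqref{eq:bald_score}, $\mathcal{I}_{\mathrm{BALD}}(\mathbf{x}) = H[\bar p] - \mathbb{E}_\omega[H(p(y\mid\mathbf{x},\omega))]$ with $\bar p \triangleq \mathbb{E}_\omega p(y\mid\mathbf{x},\omega)$. Under Assumption~\ref{ass:sharpness} I will bound the second (aleatoric) term by some $\varepsilon \ge 0$ with $\varepsilon \approx 0$. Since entropy is nonnegative, this gives
\begin{equation*}
H[\bar p] - \varepsilon \;\le\; \mathcal{I}_{\mathrm{BALD}}(\mathbf{x}) \;\le\; H[\bar p].
\end{equation*}
So BALD reduces to the entropy of the predictive mean up to an additive $O(\varepsilon)$ error. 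Mutual information is also nonnegative, because $H$ is concave and Jensen's inequality applies; this keeps the lower bound meaningful.

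Next I replace $\bar p$ by the single-pass distribution $\hat p$ using the stated approximation $\hat p \approx \bar p$. Entropy is continuous on the simplex: by the Fannes--Audenaert-type bound, $|H(\hat p)-H(\bar p)|\le \delta\log(|\mathcal{V}|-1)+h(\delta)$ when $\|\hat p-\bar p\|_{\mathrm{TV}}\le\delta$. This gives $\mathcal{I}_{\mathrm{BALD}}(\mathbf{x}) = H(\hat p) + O(\varepsilon + \eta(\delta))$ with $\eta(\delta)\to 0$. Substituting the definition \eqref{eq:conf_score}, $H(\hat p) = \log|\mathcal{V}|\,(1-c(\hat p))$, I obtain
\begin{equation*}
\mathcal{I}_{\mathrm{BALD}}(\mathbf{x}) = \log|\mathcal{V}|\,\bigl(1-c(\hat p)\bigr) + O(\varepsilon+\eta(\delta)).
\end{equation*}
BALD is therefore, up to vanishing error, a strictly decreasing affine function of $c(\hat p)$. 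It follows that any maximizer of $c(\hat p)$ over a set of inputs or parameters is an approximate minimizer of $\mathcal{I}_{\mathrm{BALD}}$, with suboptimality at most twice the uniform error term. The orderings of sequences also agree, up to ties of width $O(\varepsilon+\eta)$, which is what the quantile thresholds in the inner loop rely on. The computational claim then follows immediately, since $c(\hat p)$ needs one forward pass while BALD needs several MC-dropout passes.

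The main obstacle is making ``approximately'' precise. The assumption and the approximation $\hat p\approx\bar p$ are informal, so I will state them as explicit tolerances $\varepsilon$ and $\delta$ and carry these through as above. I will also note that the argument is one-directional and holds only in the sharp regime. If $\varepsilon$ is not small, then the aleatoric term can differ across inputs, and low $H(\hat p)$ no longer controls BALD ordering.
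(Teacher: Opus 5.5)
Your proposal is correct and follows the same route as the paper's proof. Both start from the BALD decomposition, drop the aleatoric term under the Sharpness Hypothesis, substitute $\hat p$ for the predictive mean, and use that $c$ is a decreasing affine function of $H$; your tolerances $\varepsilon,\delta$, the Fannes--Audenaert continuity bound, and the $2\times$ uniform-error suboptimality argument make precise what the paper states only as ``approximately.''
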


\subsection{Conditional bi-directional calibration}

Within a fixed branch of the gated objective, both KL branches share the unique minimizer $q=p$ (under standard support conditions); convergence in a branch, therefore, drives student entropy toward $H(p)$.

\begin{theorem}[Conditional bi-directional calibration]
\label{thm:conditional_calibration_overconfident}
\label{thm:conditional_sharpening_underconfident}
Suppose optimization under $\mathcal{L}_{\mathrm{CaRE\text{-}KD}}$ stays in a fixed branch regime and $q_\theta\to p$. Then $H(q_{\theta_t})\to H(p)$. In particular, an overconfident student ($H(q_{\theta_0})<H(p)$) must increase entropy at some step, and an underconfident student ($H(q_{\theta_0})>H(p)$) must decrease entropy. \loss\ thus acts as a conditional calibration mechanism without requiring strict monotonicity.
\end{theorem}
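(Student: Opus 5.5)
The plan has three steps: show that entropy is continuous on the simplex, use continuity to get the limit $H(q_{\theta_t})\to H(p)$, and then apply an intermediate-value style argument along the sequence to get the two directional claims.

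\textbf{Step 1: continuity and the limit.} Shannon entropy $H(r)=-\sum_{v\in\mathcal{V}} r_v\log r_v$ is continuous on the closed simplex $\Delta^{|\mathcal{V}|-1}$, since $x\mapsto -x\log x$ extends continuously to $x=0$ and $\mathcal{V}$ is finite. The hypothesis $q_\theta\to p$ is read along the training iterates, $q_{\theta_t}\to p$. Because all norms on a finite-dimensional space are equivalent, the mode of convergence (total variation or any norm) does not matter. Continuity then gives $H(q_{\theta_t})\to H(p)$ immediately.

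The fixed-branch assumption is what makes $q_{\theta_t}\to p$ a meaningful target. In the FKL branch ($g=1$) and the RKL branch ($g=0$), Gibbs' inequality gives $\mathrm{KL}\ge 0$ with equality iff $q=p$, provided $p$ and $q$ share support. So $p$ is the unique minimizer, and convergence of the branch objective to zero forces $q\to p$ by Pinsker's inequality. For the soft gate, a convex combination of the two divergences has the same unique minimizer. In the Revival-masked steps the parameters are frozen, by the remark after Proposition~\ref{thm:grad_decomp}. These steps do not affect the limit, provided infinitely many unmasked steps occur or convergence is simply assumed as stated.

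\textbf{Step 2: the overconfident case.} Suppose $H(q_{\theta_0})<H(p)$, and set $\epsilon=(H(p)-H(q_{\theta_0}))/2>0$. By Step 1 there is a $T$ with $H(q_{\theta_T})>H(p)-\epsilon>H(q_{\theta_0})$. Telescoping,
$$H(q_{\theta_T})-H(q_{\theta_0})=\sum_{t<T}\big(H(q_{\theta_{t+1}})-H(q_{\theta_t})\big)>0,$$
so some summand is positive. That is, entropy strictly increases at some step.

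\textbf{Step 3: the underconfident case and conclusion.} The underconfident case $H(q_{\theta_0})>H(p)$ is symmetric: some summand must be negative, so entropy strictly decreases at some step. Nothing in this argument requires monotonicity, which matches the statement's "without requiring strict monotonicity".

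\textbf{Main obstacle.} The only delicate point is justifying that the limit $p$ is well defined, i.e. that the branch minimizer is unique when $p$ has zeros. In the RKL branch, $\mathrm{KL}(q\|p)$ is finite only if $\mathrm{supp}(q)\subseteq\mathrm{supp}(p)$. In the FKL branch it requires the reverse inclusion. I would handle this by invoking the "standard support conditions" (for example, full support from softmax with temperature $T$), under which both KLs are finite and Gibbs' inequality applies directly. Since the theorem takes $q_\theta\to p$ as a hypothesis, the formal proof only needs continuity of $H$ plus the telescoping argument.
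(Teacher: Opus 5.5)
Your proposal is correct and follows essentially the paper's route: both fixed-branch objectives have the unique minimizer $q=p$ under the support conditions, continuity of entropy on the simplex gives $H(q_{\theta_t})\to H(p)$, and the directional claims follow because the entropy must move from $H(q_{\theta_0})$ toward $H(p)$. Your telescoping argument (and the Pinsker remark) simply makes precise the paper's informal step that the entropy ``must increase at some point along the trajectory in order to reach this limit.''
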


\section{Experimental Setup and Results}
\label{sec:exp}

Following established protocols for evaluating distilled language models \citep{kodistillm, wangabkd, ko2024distillm}, we evaluate our framework across three modalities -- instruction following, code generation, and mathematical reasoning.

\noindent \textbf{Instruction Following.}
We consider two complementary evaluation regimes.
(1) \textbf{Standard benchmarks.}
Following \citet{ko2024distillm}, we perform distillation on the Dolly-15k dataset \citep{conover2023free}. We evaluate four teacher--student model families: (i) GPT-2$_{\text{XL}} \rightarrow$ GPT-2$_{\text{Base/Large}}$ \citep{radford2019language}, (ii) OPT-2.7B $\rightarrow$ OPT-125M \citep{zhang2022opt}, (iii) Gemma-2-9B-Instruct $\rightarrow$ Gemma-2-2B-Instruct~\citep{team2024gemma} and (iv) OpenLLaMA-7B $\rightarrow$ OpenLLaMA-3B \citep{openlm2023openllama}. Evaluation is conducted on four task-agnostic instruction-following benchmarks: Dolly Evaluation \citep{conover2023free}, Self-Instruct \citep{wang2023self}, Super-Natural Instructions \citep{wang-etal-2022-super}, and Vicuna Instructions \citep{wang-etal-2022-super}.
(2) \textbf{Large-scale alignment.}
To assess performance on modern chat-aligned models, we follow \citet{kodistillm} and distill a Qwen2-1.5B student from a Qwen2-7B teacher \citep{qwen2} using the UltraChat200k dataset \citep{ding2023enhancing}. The resulting models are evaluated on AlpacaEval \citep{li2023alpacaeval}, Evol-Instruct \citep{xu2024wizardlm}, and UltraFeedback \citep{cuiultrafeedback}.

\noindent \textbf{Code Generation.}
For domain-specific coding tasks, we replicate the setup of \citet{kodistillm}. We distill a DeepSeek-Coder-1.3B student from a DeepSeek-Coder-6.9B-Instruct teacher \citep{guo2024deepseekcoderlargelanguagemodel}. Training is performed on the WizardCoder dataset \citep{luo2024wizardcoder}, and evaluation is conducted on HumanEval \citep{chen2021evaluatinglargelanguagemodels} and MBPP \citep{austin2021program}.

\noindent \textbf{Mathematical Reasoning.}
To evaluate multi-step reasoning capabilities, we use the Qwen2.5-Math models \citep{qwen2.5}. A 1.5B student is distilled from a 7B-Instruct teacher using the MetaMathQA dataset \citep{yumetamath}. We report zero-shot accuracy on GSM8K \citep{cobbe2021training} and CollegeMath \citep{tang2024mathscale}.

\noindent \textbf{Metrics.}
For instruction-following and chat alignment we report ROUGE-L on the held-out evaluation set, following \citet{ko2024distillm}, together with an LLM-as-a-judge factuality score (0--100) computed by prompting GPT-5-Mini to grade outputs against the reference answer for hallucinations and factual consistency (system prompt in Appendix~\ref{app:llm_judge_prompt}). For code generation we report execution-based pass@1 on HumanEval and MBPP, and for mathematical reasoning we report exact-match accuracy on GSM8K and CollegeMath. Statistical significance for ROUGE-L improvements is assessed using paired $t$-tests across model pairs. Additional implementation details, hyperparameters, training protocols, and the LLM-judge system prompt are provided in Appendix~\ref{app:exp_details}.

\subsection{Results on instruction following tasks}
\label{subsec:main_results}

We evaluate \method\ under SGO training (student rollouts) and non-SGO (teacher forcing); each cell of Tables~\ref{tab:main_results} and~\ref{tab:divergence_comparison} reports ROUGE-L / LLM-as-a-judge factuality.

In the SGO setting (Table~\ref{tab:main_results}), \loss\ improves over Forward KL by $+2.6$ / $+4.3$ on GPT2-base, $+2.0$ / $+3.4$ on GPT2-large, and $+3.2$ / $+3.0$ on OPT-125M (ROUGE-L / LLM-judge), and beats Skewed RKL by $+0.3$ / $+1.5$ on GPT2-base. On OpenLLaMA-7B $\rightarrow$ 3B, it achieves the best ROUGE-L on Self-Instruct and Super-Natural Instructions and stays within 0.2 of the best baseline on average. Paired $t$-tests confirm that the average improvements of \loss\ over the per-row second-best baseline are statistically significant ($t=3.93$, $p=0.03$). Detailed comparisons against a comprehensive baseline suite (SFT, KD, SeqKD, ImitKD, MiniLLM, GKD, Distillm, ABKD) appear in Appendix~\ref{app:detailed_results} (Tables~\ref{tab:baseline_gpt2base},~\ref{tab:baseline_opt},~\ref{tab:baseline_llama}); \loss\ tops the average ROUGE-L on GPT2-base and OPT-125M.

\begin{table*}[t]
\caption{\textbf{Main instruction-following results under SGO training.} Cells report ROUGE-L / LLM-as-a-judge factuality on Dolly, Self-Instruct, Super-Natural Instructions, and Vicuna. Across five teacher--student pairs, \loss\ achieves the best average ROUGE-L on three pairs and the best factuality on three pairs, with statistically significant gains over the second-best baseline ($t=3.93$, $p=0.03$). Bold denotes the best score per cell. Results are averaged over five seeds; detailed comparisons appear in Appendix~\ref{app:detailed_results}.}
\label{tab:main_results}
\centering
\begin{small}
\begin{sc}
\scalebox{0.85}{
\begin{tabular}{llccccc}
\toprule

\multirow{2}{*}{\textbf{Teacher $\rightarrow$ Student}} & \multirow{2}{*}{\textbf{Loss}} & \multicolumn{4}{c}{\textbf{Benchmarks}} & \multirow{2}{*}{\textbf{Avg}} \\
\cmidrule(lr){3-6}
  & & Dolly & Self-Inst & Super-Nat & Vicuna & \\
\midrule

\rowcolor{blockblue}  & FKL & 23.4/21.2 & 10.7/15.6 & 19.0/16.4 & 15.0/13.5 & 17.0/16.7 \\
\rowcolor{blockblue}  & RKL & 24.7/20.7 & 11.6/16.4 & 19.2/16.2 & 16.6/14.6 & 18.0/17.0 \\
\rowcolor{blockblue}  & SRKL & 24.9/23.1 & 12.1/17.6 & \textbf{23.6}/21.9 & 16.5/15.3 & 19.3/19.5 \\
\rowcolor{blockblue}  & $\alpha$-$\beta$ Div & 23.9/23.1 & 11.5/16.4 & 21.9/18.3 & 16.1/15.0 & 18.4/18.2 \\
\rowcolor{blockblue} \multirow{-5}{*}{\shortstack[l]{GPT2-XL (1.5B) $\rightarrow$ \\ GPT2-base (0.1B)}}
& \texttt{CaRE-Div} (Ours)& \textbf{25.6/24.1} & \textbf{12.9/20.1} & 22.8/\textbf{24.4} & \textbf{17.3/15.3} & \textbf{19.6/21.0} \\

\cdashline{1-7}

\rowcolor{blockred}  & FKL & 23.7/21.5 & 10.1/13.6 & 15.7/13.4 & 15.0/14.4 & 16.1/15.7 \\
\rowcolor{blockred}  & RKL & 19.3/14.7 & 9.9/14.3 & 16.3/13.0 & 15.0/11.3 & 15.1/13.3 \\
\rowcolor{blockred}  & SRKL & 24.1/20.5 & 11.1/15.0 & \textbf{19.9}/17.0 & \textbf{16.4}/15.3 & 17.9/16.9 \\
\rowcolor{blockred}  & $\alpha$-$\beta$ Div & 23.0/20.1 & 10.7/14.6 & 16.7/15.2 & 15.5/13.3 & 16.5/15.8 \\
\rowcolor{blockred} \multirow{-5}{*}{\shortstack[l]{GPT2-XL (1.5B) $\rightarrow$ \\ GPT2-large (0.8B)}}
& \texttt{CaRE-Div} (Ours)& \textbf{25.4/25.5} & \textbf{11.6/16.7} & 19.4/\textbf{18.0} & 16.0/\textbf{16.1} & \textbf{18.1/19.1} \\

\cdashline{1-7}

\rowcolor{blockorange}  & FKL & 22.4/20.7 & 9.6/14.1 & 17.1/14.5 & 15.5/12.9 & 16.1/15.5 \\
\rowcolor{blockorange}  & RKL & 24.8/22.7 & 11.2/15.8 & 22.2/18.4 & 16.4/15.9 & 18.6/18.2 \\
\rowcolor{blockorange}  & SRKL & \textbf{25.1/23.9} & 11.6/\textbf{19.3} & 22.5/19.5 & \textbf{16.8/18.1} & 19.0/\textbf{20.2} \\
\rowcolor{blockorange}  & $\alpha$-$\beta$ Div & 24.9/23.6 & 11.2/16.5 & 21.7/18.3 & 16.7/13.9 & 18.6/18.1 \\
\rowcolor{blockorange} \multirow{-5}{*}{\shortstack[l]{OPT-2.7B $\rightarrow$ \\ OPT-125M}}
& \texttt{CaRE-Div} (Ours)& 24.3/20.5 & \textbf{12.0}/18.7 & \textbf{23.9/21.4} & 16.7/13.5 & \textbf{19.3}/18.5 \\

\cdashline{1-7}

\rowcolor{blockred}  & FKL & 18.4/19.6 & 10.4/22.0 & 19.2/15.3 & 17.1/16.0 & 16.3/18.2 \\
\rowcolor{blockred}  & RKL & 17.5/20.2 & 10.0/21.8 & \textbf{19.7}/19.1 & 20.1/\textbf{21.9} & 16.8/20.7 \\
\rowcolor{blockred}  & SRKL & \textbf{19.7}/24.9 & 10.8/20.8 & 16.5/18.3 & 18.9/18.2 & 16.5/20.6 \\
\rowcolor{blockred}  & $\alpha$-$\beta$ Div & 18.9/\textbf{25.6} & 10.7/21.6 & 17.5/17.2 & 20.0/20.2 & 16.8/21.1 \\
\rowcolor{blockred} \multirow{-5}{*}{\shortstack[l]{Gemma-2-9B-IT $\rightarrow$ \\ Gemma-2-2B-IT}}
& \texttt{CaRE-Div} (Ours)& 17.0/23.4 & \textbf{11.5/22.0} & 19.4/\textbf{19.9} & \textbf{20.3}/21.0 & \textbf{17.1/21.6} \\

\cdashline{1-7}

\rowcolor{blockblue}  & FKL & 24.3/32.2 & 17.6/25.7 & 30.5/31.3 & 16.9/22.3 & 22.3/27.9 \\
\rowcolor{blockblue}  & RKL & \textbf{29.1}/40.0 & 20.7/\textbf{31.7} & 35.7/\textbf{39.4} & \textbf{20.0}/27.8 & \textbf{26.4}/34.7 \\
\rowcolor{blockblue}  & SRKL & 29.0/\textbf{40.9} & 20.8/31.2 & 36.6/37.8 & 19.2/\textbf{31.1} & \textbf{26.4/35.3} \\
\rowcolor{blockblue}  & $\alpha$-$\beta$ Div & 28.3/38.1 & 19.3/29.7 & 33.5/35.2 & 18.6/29.9 & 24.9/33.2 \\
\rowcolor{blockblue} \multirow{-5}{*}{\shortstack[l]{OpenLLaMA2-7B $\rightarrow$ \\ OpenLLaMA2-3B}}
& \texttt{CaRE-Div} (Ours)& 27.7/38.1 & \textbf{20.9}/29.8 & \textbf{37.3}/38.6 & 18.9/28.6 & 26.2/33.8 \\

\bottomrule
\end{tabular}
}
\end{sc}
\end{small}
\end{table*}

\rev{\noindent\textbf{The advantage is robustness, not a single large win.} The accurate reading of Table~\ref{tab:main_results} is that \method\ is ``never bad'' rather than ``always best'': under SGO it has the best average on four of the five pairs and the highest mean ROUGE-L averaged over all five ($20.1$ vs.\ $19.8$ for the next-best static objective, Skewed-RKL). The more informative comparison is worst-case behavior. Each single-geometry objective breaks somewhere --- Forward KL trails the best by $4.1$ on OpenLLaMA, Reverse KL by $3.0$ on GPT2-large, and $\alpha$--$\beta$ by $1.6$ on GPT2-large --- and even the strongest static baseline, Skewed-RKL, trails by up to $0.6$ (Gemma). \method's worst gap on any pair is $0.2$. On OpenLLaMA-7B$\rightarrow$3B the $0.18$ ROUGE-L gap ($26.23$ vs.\ $26.41$, Table~\ref{tab:baseline_llama}) is smaller than the per-seed standard deviation of either method on this pair ($0.19$--$0.90$ across tasks), so we treat the pair as a statistical tie; Gemma-2-2B-IT on Dolly ($17.0$ vs.\ $19.7$) is an outright loss, which we state plainly.}

\rev{\noindent\textbf{Scope of applicability.} Two factors set how much the confidence gate can help: how unreliable the teacher is, and how far the student sits from the teacher (roughly the compression ratio). The clearest wins occur where both are large --- small students distilled from weak teachers under heavy compression (GPT2-XL$\rightarrow$base, $\sim$$15\times$; OPT-2.7B$\rightarrow$125M, $\sim$$22\times$). The one tie, OpenLLaMA-7B$\rightarrow$3B, distills a strong, well-calibrated teacher at only $2.3\times$, so the student already sits close to the teacher and there is little unreliable supervision to filter; here a static mode-seeking objective is near-optimal. \method\ should therefore be preferred under large capacity gaps and precision-sensitive tasks.}

In the non-SGO (teacher-forcing) setting (Table~\ref{tab:divergence_comparison}, Appendix~\ref{app:detailed_results}), \loss\ achieves $19.5$/$19.7$ (R/LLM) on GPT2-base, strictly dominating SRKL ($17.7$/$17.7$) and FKL ($16.1$/$15.6$); validation-set cross-entropy and exact-match dynamics confirming faster, lower-loss convergence are in Figures~\ref{fig:valid_loss}--\ref{fig:valid_exact_match} (Appendix~\ref{app:detailed_results}).

\subsection{LLM-as-a-judge factuality}
\label{subsec:llm_judge}

ROUGE-L is blind to hallucination. Therefore, we additionally grade every output with GPT-5-Mini on a $0$--$100$ factuality scale (system prompt: Appendix~\ref{app:llm_judge_prompt}). \loss\ improves average factuality alongside ROUGE-L on most pairs: $+1.5$ over SRKL on GPT2-base ($21.0$ vs.\ $19.5$), $+2.2$ on GPT2-large, and the highest Evol-Instruct factuality on Qwen2-1.5B (\,$53.0$ vs.\ $52.1$ Distillm-2). Per-task gaps reach $+2.5$ over SRKL (Self-Instruct, Super-Natural Instructions; GPT2-base). To probe whether \revivalname\ targets hallucination-prone regions, we measure teacher factuality conditional on the mask: on Dolly with the OpenLLaMA-7B teacher, the teacher's factuality is $25.2$ when \revivalname\ fires vs.\ $28.6$ when it does not (one-sided Wilcoxon $p=0.06$). \rev{We treat this as suggestive rather than conclusive: the Wilcoxon test is not significant, and a complementary check that splits the sequences by confidence regime and compares teacher factuality where \revivalname\ fires against the opposite region is also not significant (Mann--Whitney $p=0.26$). We therefore do not claim that \revivalname\ selects specifically low-factuality tokens at the mechanism level; the claim we make, and that holds, is that the distilled student's \emph{end-to-end} factuality improves under the LLM judge (Tables~\ref{tab:main_results} and~\ref{tab:additional_results}).}

\subsection{\revivalname\ as a loss-agnostic enhancer}
\label{subsec:revival_impact}

Table~\ref{tab:revival_delta} isolates \revivalname\ by reporting $\Delta = \text{Score}_{\text{with}}-\text{Score}_{\text{without}}$. \loss\ exhibits the most consistent positive response: $+1.42$ avg.\ ROUGE-L on OPT-125M ($+3.62$ on Super-Natural Instructions). \revivalname\ also improves static baselines (e.g., $+1.14$ for RKL on GPT2-base), supporting the broader hypothesis that selective filtering is a general enhancement mechanism \citep{he2025dakd,fang2024bayesian}; however, static objectives are high-variance (RKL $-1.78$ on Self-Instruct for GPT2-large), whereas \loss\ remains consistently positive on average, highlighting the complementarity of dynamic geometry and selective rejection. \rev{The precise claim is about the average, not every cell: \loss\ itself has two small negative entries ($-0.4$ on Vicuna/Gemma, $-0.04$ on Dolly/GPT2-large), but it has the most reliable positive response overall --- the highest mean $\Delta$ with the fewest and smallest negatives. A one-sample $t$-test over the per-task deltas confirms this: \loss\ $t=3.98$ ($p<0.001$), whereas Forward KL ($t=-2.55$) and Reverse KL ($t=1.24$, $p=0.25$) are not reliably positive.}

\begin{table}[!t]
\caption{\textbf{Impact of \revivalname\ on different distillation losses.} Values are the per-task delta $\Delta = \text{Score}_{\text{with-\revivalname}} - \text{Score}_{\text{no-\revivalname}}$ in ROUGE-L. \loss\ shows the most consistent positive response, with a statistically significant average improvement across model pairs (one-sample $t$-test: $t=3.98$, $p<0.001$). Static baselines such as FKL ($t=-2.55$, $p=0.06$) often degrade with \revivalname, while RKL ($t=1.24$, $p=0.25$) yields insignificant gains, indicating that our confidence-gated geometry is uniquely synergistic with epistemic filtering. Bold marks the largest positive delta per (model, task) cell.}
\label{tab:revival_delta}
\centering
\begin{small}
\begin{sc}
\scalebox{0.9}{
\begin{tabular}{llccccc}
\toprule
\multirow{2}{*}{\textbf{Student}} & \multirow{2}{*}{\textbf{Loss}} & \multicolumn{4}{c}{\textbf{Benchmarks}} & \multirow{2}{*}{\textbf{Avg}} \\
\cmidrule(lr){3-6}
  & & Dolly & Self-Inst & Super-Nat & Vicuna & \\
\midrule
\rowcolor{blockblue}  & FKL & 0.22 & 0.14 & -0.64 & 0.09 & -0.05 \\
\rowcolor{blockblue}  & RKL & 0.32 & 0.61 & {3.51} & 0.12 & {1.14} \\
\rowcolor{blockblue}  & SRKL & 0.41 & 0.54 & 1.64 & 0.30 & 0.72 \\
\rowcolor{blockblue}  & $\alpha$-$\beta$ Div  & {1.11} & 0.80 & 0.94 & 0.67 & 0.88 \\
\rowcolor{blockblue} \multirow{-5}{*}{\shortstack[l]{GPT2-base}} & \texttt{CaRE} & 0.35 & {0.94} & 1.26 & {1.04} & 0.90 \\

\cdashline{1-7}

\rowcolor{blockred}  & FKL & -0.36 & -0.69 & -0.22 & -0.26 & -0.38 \\
\rowcolor{blockred}  & RKL & 0.35 & -1.78 & -1.44 & {1.75} & -0.28 \\
\rowcolor{blockred}  & SRKL & {0.43} & 0.70 & {1.09} & -0.56 & 0.41 \\
\rowcolor{blockred}  & $\alpha$-$\beta$ Div  & -0.44 & -0.35 & 0.70 & 0.68 & 0.15 \\
\rowcolor{blockred} \multirow{-5}{*}{\shortstack[l]{GPT2-large}} & \texttt{CaRE} & -0.04 & {0.71} & 1.06 & 0.10 & {0.46} \\

\cdashline{1-7}

\rowcolor{blockorange}  & FKL & -0.23 & -0.18 & -0.01 & -0.29 & -0.18 \\
\rowcolor{blockorange}  & RKL & 0.29 & 0.57 & 0.75 & {0.71} & 0.58 \\
\rowcolor{blockorange}  & SRKL & 0.27 & 0.61 & 1.98 & -0.76 & 0.52 \\
\rowcolor{blockorange}  & $\alpha$-$\beta$ Div  & -0.42 & 0.35 & 0.05 & -0.43 & -0.11 \\
\rowcolor{blockorange} \multirow{-5}{*}{\shortstack[l]{OPT-125M}} & \texttt{CaRE} & {0.55} & {1.12} & {3.62} & 0.38 & {1.42} \\

\cdashline{1-7}

\rowcolor{blockred}  & FKL & -4.02 & -3.68 & -7.31 & -0.66 & -3.91 \\
\rowcolor{blockred}  & RKL & -0.48 & -2.68 & 1.54  & 0.4   & -0.3\\
\rowcolor{blockred}  & SRKL & 1.48  & 1.89  & 1.56  & {0.99}  & 1.48 \\
\rowcolor{blockred}  & $\alpha$-$\beta$ Div  & 1.37  & 2.05  & {10.17} & -1.26 & {3.08} \\
\rowcolor{blockred} \multirow{-5}{*}{\shortstack[l]{Gemma-2-2B-IT}} & \texttt{CaRE} & {3.6}   & {2.61}  & 5.79  & -0.4  & 2.90\\

\cdashline{1-7}

\rowcolor{blockblue}  & FKL & 0.18 & -0.31 & -0.96 & 0.09 & -0.25 \\
\rowcolor{blockblue}  & RKL & 0.02 & 0.18 & -0.29 & -0.03 & -0.04\\
\rowcolor{blockblue}  & SRKL & -0.15 & 0.67 & -1.48 & -0.10 & -0.26 \\
\rowcolor{blockblue}  & $\alpha$-$\beta$ Div  & 0.24 & {1.14} & {2.16} & 0.43 & {0.99} \\
\rowcolor{blockblue} \multirow{-5}{*}{\shortstack[l]{OLLaMA2-3B}} & \texttt{CaRE} & {0.27} & 0.26 & 1.82 & {0.57} & 0.73 \\

\bottomrule
\end{tabular}
}
\end{sc}
\end{small}
\end{table}

\begin{table*}[t]
\caption{\textbf{Generalization to specialized domains.} Cells report ROUGE-L / LLM-as-a-judge factuality. We evaluate three model families: chat alignment (Qwen2), code generation (DeepSeek-Coder), and math reasoning (Qwen2.5-Math). \loss\ consistently improves precision-sensitive tasks, achieving \textbf{+2.1 pp} on MBPP, \textbf{+1.7 pp} on GSM8k, and \textbf{+1.8 pp} on CollegeMath over the strongest baseline. Gains are statistically significant ($t=3.05$, $p=0.018$). Bold denotes the best score per cell. \rev{GKD and Distillm are additionally reported on the code and math tasks, where \method\ remains best on MBPP, GSM8K and CollegeMath and tied-best on HumanEval; chat-alignment entries for these two baselines are still running and omitted (``--'').}}
\label{tab:additional_results}
\centering
\begin{small}
\begin{sc}
\scalebox{0.85}{
\begin{tabular}{lccccccc}
\toprule
\multirow{3}{*}{\textbf{Method}} & \multicolumn{3}{c}{\cellcolor{blockblue}\textbf{\shortstack{Qwen2-7B-Inst \\ $\rightarrow$ Qwen2-1.5B}}} & \multicolumn{2}{c}{\cellcolor{blockred}\textbf{\shortstack{DS-Coder-6.9B-Inst \\ $\rightarrow$ DS-Coder-1.3B}}} & \multicolumn{2}{c}{\cellcolor{blockgreen}\textbf{\shortstack{Qwen2.5-Math-7B-Inst \\ $\rightarrow$ Qwen2.5-Math-1.5B}}} \\
 & \cellcolor{blockblue}Alpaca & \cellcolor{blockblue}Evol & \cellcolor{blockblue}Ultra & \cellcolor{blockred}HEval & \cellcolor{blockred}MBPP & \cellcolor{blockgreen}GSM8k &
 \cellcolor{blockgreen}College \\
 & \cellcolor{blockblue}\scriptsize{Rouge-L/LLM} & \cellcolor{blockblue}\scriptsize{Rouge-L/LLM} & \cellcolor{blockblue}\scriptsize{Rouge-L/LLM} & \cellcolor{blockred}\scriptsize{pass@1} & \cellcolor{blockred}\scriptsize{pass@1} & \cellcolor{blockgreen}\scriptsize{pass@1} &
 \cellcolor{blockgreen}\scriptsize{pass@1} \\
\midrule
Student & \cellcolor{blockblue}8.81/27.3 & \cellcolor{blockblue}14.9/31.2 & \cellcolor{blockblue}10.8/30.5 & \cellcolor{blockred}32.3 & \cellcolor{blockred}58.5 & \cellcolor{blockgreen}69.9 &
\cellcolor{blockgreen}37.1 \\
\cdashline{1-8}
Distillm2 & \cellcolor{blockblue}16.3/43.3 & \cellcolor{blockblue}26.5/52.1 & \cellcolor{blockblue}22.1/\textbf{49.3} & \cellcolor{blockred}\textbf{43.3} & \cellcolor{blockred}60.3 & \cellcolor{blockgreen}72.2 &
\cellcolor{blockgreen}44.2 \\
ABKD & \cellcolor{blockblue}\textbf{16.4}/\textbf{44.8} & \cellcolor{blockblue}26.7/51.9 & \cellcolor{blockblue}22.3/48.4 & \cellcolor{blockred}42.7 & \cellcolor{blockred}59.8 & \cellcolor{blockgreen}71.0 &
\cellcolor{blockgreen} 44.3 \\
\textcolor{black}{GKD} & \cellcolor{blockblue}\textcolor{black}{--} & \cellcolor{blockblue}\textcolor{black}{--} & \cellcolor{blockblue}\textcolor{black}{--} & \cellcolor{blockred}\textcolor{black}{43.1} & \cellcolor{blockred}\textcolor{black}{60.2} & \cellcolor{blockgreen}\textcolor{black}{71.4} & \cellcolor{blockgreen}\textcolor{black}{43.9} \\
\textcolor{black}{Distillm} & \cellcolor{blockblue}\textcolor{black}{--} & \cellcolor{blockblue}\textcolor{black}{--} & \cellcolor{blockblue}\textcolor{black}{--} & \cellcolor{blockred}\textcolor{black}{42.9} & \cellcolor{blockred}\textcolor{black}{60.0} & \cellcolor{blockgreen}\textcolor{black}{71.7} & \cellcolor{blockgreen}\textcolor{black}{44.1} \\
\cdashline{1-8}
\rowcolor{gray!10}
\method (Ours) & \textbf{16.4}/44.4 & \textbf{26.8/53.0} & \textbf{22.6}/\textbf{49.3} & \textbf{43.3} & \textbf{62.4} & \textbf{73.9} & \textbf{46.1} \\
\bottomrule
\end{tabular}
}
\end{sc}
\end{small}
\end{table*}

\subsection{Results on specialized domains}
\label{subsec:domain_results}

Table~\ref{tab:additional_results} extends evaluation to chat alignment (Qwen2-7B$\to$1.5B), code (DeepSeek-Coder-6.9B$\to$1.3B), and math (Qwen2.5-Math-7B$\to$1.5B). \loss\ leads on Evol-Instruct ($26.8$/$53.0$ vs.\ SRKL $26.5$/$52.1$) and UltraFeedback ($22.6$/$49.3$); reaches $62.4$ pass@1 on MBPP ($+2.1$ vs.\ SRKL, $+2.6$ vs.\ ABKD); and achieves $73.9\%$ on GSM8K ($+1.7$ vs.\ SRKL) and $46.1\%$ on CollegeMath ($+1.8$ vs.\ Distillm-2/ABKD). \rev{Against the additional on-policy and divergence baselines (GKD, Distillm) now reported in Table~\ref{tab:additional_results} for the code and math tasks, \method\ remains best on MBPP, GSM8K and CollegeMath and tied-best on HumanEval.} Precision-sensitive domains  ---  where a single hallucinated step can break a proof, a unit test, or a logical argument  ---  benefit most from confidence-conditioned mode seeking and rejection of unreliable teacher supervision.

\subsection{Hyperparameter sensitivity and ablations}
\label{subsec:ablation_analysis}


We conduct ablations on instruction-following tasks with both GPT- and OPT-based students (Figures~\ref{fig:ablation} and~\ref{fig:ablation_opt} in Appendix~\ref{app:detailed_results}). Several consistent trends emerge across architectures. Moderate-to-high skip percentages (roughly 50–70\%) consistently outperform low filtering regimes, indicating that a substantial portion of teacher supervision lies in epistemically unreliable regions. An increasing \revivalname\ schedule, where selectivity is gradually strengthened over training, reliably yields the best performance, supporting a curriculum effect in which early broad supervision transitions into progressive filtering as the student's confidence improves. Increasing the number of Monte Carlo samples used for BALD estimation improves stability, though strong performance is already achieved with as few as $N=3$ when combined with an increasing schedule. Hard gating attains the highest peak performance, while soft gating provides slightly greater robustness across configurations. Performance exhibits a unimodal dependence on sampling temperature, peaking at moderate values (around $T=3.0$), confirming that effective epistemic estimation requires controlled stochasticity rather than aggressive noise. We further ablate the margin $m$ in $g_{\mathrm{soft}}$ defined in Equation~\ref{eq:gates}. Lower $m$ performs the best, balancing the forward and reverse KL divergence metrics, while higher margins bias the objective and degrade performance. \rev{A small margin does not collapse the soft gate into hard gating: $m$ only shifts where the sigmoid is centered, not its slope, so at $m\!=\!0.05$ the gate is still a smooth, differentiable function of the confidence gap --- a near-symmetric switch centered near a gap of zero --- and the soft-gate selection term of Proposition~\ref{thm:grad_decomp} is retained. Its being the best margin is expected: it matches the hard-gate behavior of switching branches right at the crossover point, exactly where hard gating attains its highest peak, whereas a large $m$ biases the gate toward one geometry across most tokens. Soft gating is thus the robustness-oriented variant and hard gating the peak-performance variant.} The consistency of these trends across GPT and OPT families demonstrates that the design principles underlying \method\ generalize across architectures.

\begin{figure*}[!htb]
    \centering
    \includegraphics[width=\linewidth]{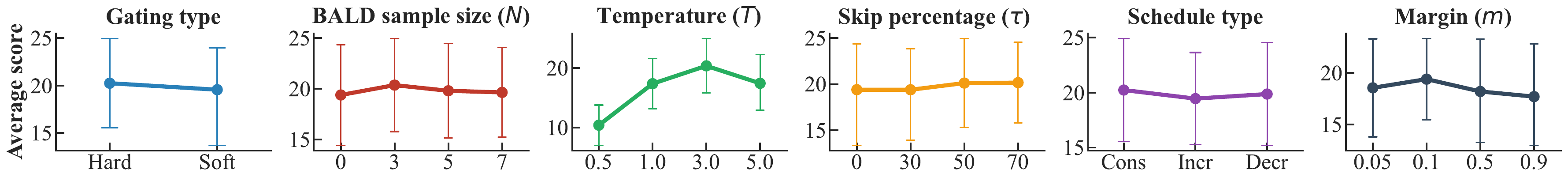}
    \caption{\textbf{Ablation study of \method\ with GPT2-base.} (a) gating strategy ($g_{\mathrm{hard}}$ vs.\ $g_{\mathrm{soft}}$); (b) MC-dropout sample count $N$; (c) sampling temperature $T$; (d) target skip percentage $\tau$; (e) \revivalname\ scheduling strategy (constant, increasing, decreasing); and (f) margin $m$ in soft gating.}
    \label{fig:ablation}
\end{figure*}

\section{Discussion}
\label{sec:discussion}

\noindent\textbf{Decoupling student confidence from teacher uncertainty.}
A core failure mode of standard KD is that the student inherits the teacher's epistemic uncertainty rather than its competence: when the teacher is unsure, Forward KL forces the student to spread probability mass over the teacher's heavy tail, eroding sharper, correct priors. Figure~\ref{fig:revival_bald} shows that \method\ explicitly avoids this trap. Across all four benchmarks, the teacher's BALD distribution exhibits a long, heavy tail of high-uncertainty samples; the student trained with \revivalname\ retains a tightly concentrated low-BALD profile that is closer to its pre-distillation prior than to the teacher's tail. The decoupling is large in magnitude: on Self-Instruct the teacher's mean BALD is $0.52$ while the corresponding distilled student reaches $0.077$, and the Kolmogorov--Smirnov distance between the two distributions exceeds $0.5$ on Vicuna and Dolly. \rev{Reading the pre- versus post-distillation KS drift (Figures~\ref{fig:motivation}a and~\ref{fig:revival_bald}) also isolates the two components of \method. Standard Skewed-RKL barely moves the student's uncertainty away from the teacher: the KS distance drifts by only $0.4\%$ ($0.840\!\to\!0.836$). The \loss\ token-level loss alone (\revivalname\ removed) already drifts by $2.4\%$ on Dolly ($0.840\!\to\!0.864$) --- several times the baseline --- and adding \revivalname\ raises the drift to $7\%$. The token-level loss is thus the main driver and \revivalname\ amplifies it; neither is a $0.4\%$ effect.} The same pattern is reproduced under static divergences augmented with \revivalname\ (Skewed RKL and $\alpha$--$\beta$, Figures~\ref{fig:revival_bald_distillm2}--\ref{fig:revival_bald_ab} in Appendix~\ref{app:discussion}), establishing that the effect is driven by selective rejection rather than by any one objective. Crucially, the rejection rate is not a static threshold: the proportion of batches triggering $M_{\mathrm{Revival}}$ rises monotonically through training (Figures~\ref{fig:conf_trend}--\ref{fig:conf_trend2}), reflecting a curriculum in which broad early supervision gives way to progressively stricter filtering as the student becomes more confident than the teacher on a growing subset of inputs. This empirically realizes the bi-directional conditional calibration formalized in Theorem~\ref{thm:conditional_calibration_overconfident}.

\begin{figure*}[!htb]
    \centering
    \includegraphics[width=\linewidth]{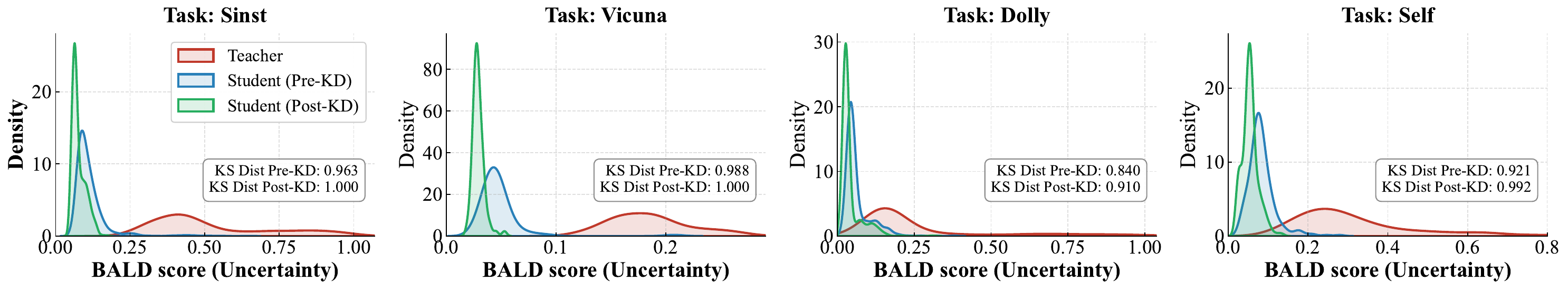}
    \caption{\textbf{BALD uncertainty distributions across four instruction-following benchmarks.} Teacher models show heavy high-uncertainty tails (red), whereas \method\ students (blue) suppress these regions and maintain concentrated low-BALD profiles. Reported KS distances indicate strong distributional separation, exceeding $0.5$ on Vicuna and Dolly.}
    \label{fig:revival_bald}
\end{figure*}

\noindent\textbf{Computational analysis.}
\begin{figure}[!htb]
    \centering
    \subfloat[Wall-clock vs.\ MC samples]{\includegraphics[width=0.46\linewidth]{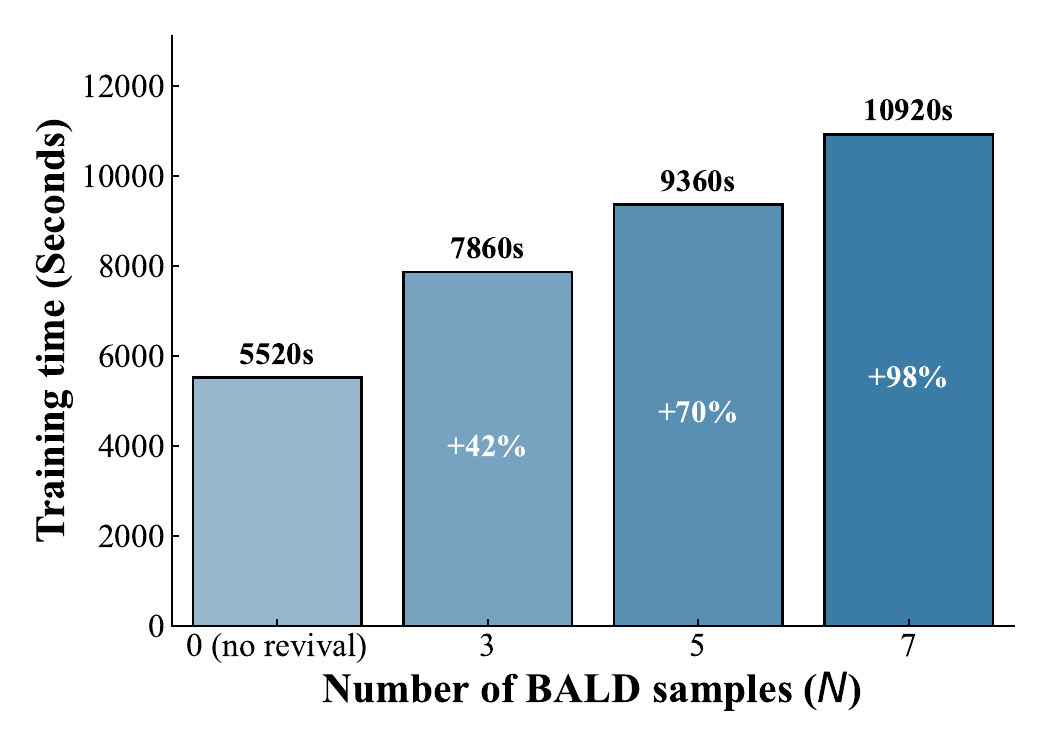}}\hfill
    \subfloat[Wall-clock vs.\ skip rate]{\includegraphics[width=0.46\linewidth]{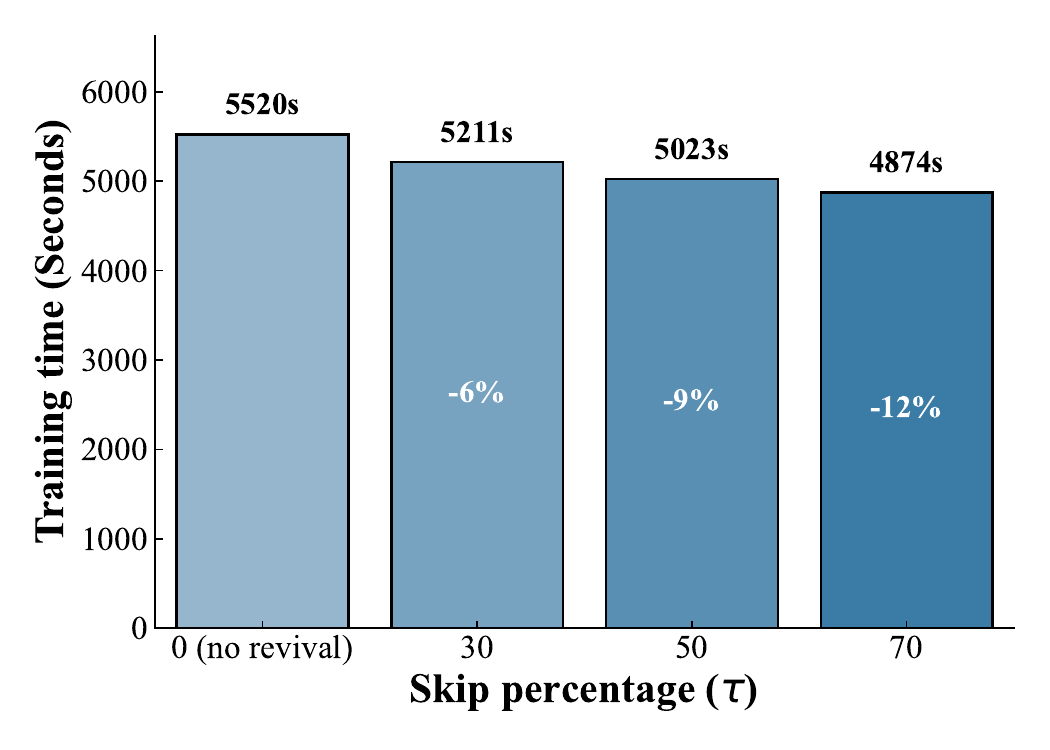}}
    \caption{\textbf{Runtime on Dolly-15k with OpenLLaMA-3B for one epoch.} (a) Wall-clock time scales near-linearly with the number of MC dropout samples $N$ for BALD estimation. (b) Higher skip rates $\tau$ reduce BALD overhead by skipping backward passes on rejected batches.}
    \label{fig:runtime}
\end{figure}
\revivalname\ trades a moderate amount of additional forward-pass compute for substantially more reliable supervision. The dominant cost is BALD estimation, which requires $N$ stochastic dropout passes per batch and scales near-linearly in $N$ (Figure~\ref{fig:runtime}(a)). Concretely, on Dolly with OpenLLaMA-3B, one epoch takes 5520s without \revivalname\ and 7860s with $N=3$, a $42\%$ overhead; pushing to $N=7$ approaches a $98\%$ overhead. This cost is partially recovered by the rejection mechanism itself: when $M_{\mathrm{Revival}}=1$, the corresponding backward pass is skipped, so increasing $\tau$ reduces effective wall-clock per epoch (Figure~\ref{fig:runtime}(b)); a $70\%$ skip target recovers roughly $12\%$ of the BALD overhead. \rev{This $\sim$$40\%$ figure is specific to instruction following, however; the per-domain overhead at $N=3$ is substantially larger, and is largest precisely when the training step is otherwise cheap (per-domain breakdown in Figure~\ref{fig:runtime2}, Appendix~\ref{app:discussion}): approximately $+106\%$ for chat alignment, $+170\%$ for code, and $+148\%$ for math. We therefore no longer describe the overhead as ``modest.'' \revivalname\ is a poor fit for settings with very large teachers, long sequences, or online/streaming updates; in those regimes the single-pass entropy proxy justified by Theorem~\ref{thm:conf_equiv} removes the extra stochastic passes at a small loss of precision. Because $N=3$ paired with an increasing schedule already achieves the strongest empirical performance and distillation is a one-time offline cost incurred before deployment, the overhead remains acceptable for the offline, high-compression settings \method\ targets.}

\noindent\textbf{Limitations.}
BALD-based rejection requires multiple stochastic passes and therefore costs more than single-pass distillation. As shown above, this overhead is $\sim$$40\%$ at $N=3$ for instruction following \rev{but considerably larger per domain (up to $\sim$$170\%$ on code), and is largest when the training step is otherwise cheap}; it is partially amortized by skipped updates, but in compute-constrained regimes, \rev{or with very large teachers, long sequences, or online updates}, it may still be undesirable. In such settings, the cheaper entropy proxy justified by Theorem~\ref{thm:conf_equiv} remains a valid drop-in for the rejection rule, with a small loss of precision. Second, the equivalence between entropy-based confidence and BALD relies on the Sharpness Hypothesis (Assumption~\ref{ass:sharpness}), which is best supported for cross-entropy-trained LLMs in regimes where individual MC samples are sharp. \rev{We stress that this assumption is used \emph{only} to justify the optional single-pass entropy fallback: every reported result uses the full MC-dropout BALD estimate, which does not rely on Assumption~\ref{ass:sharpness}.} In genuinely multimodal open-ended generation, where stochastic passes can disagree on \emph{which} valid continuation to predict rather than \emph{how confidently}, this assumption may weaken; relaxing the analysis to high-aleatoric regimes is a natural direction for future work.

\section{Conclusion}
\label{sec:conclu}

We introduced \method, a confidence-aware framework that redefines how supervision is transferred in generative knowledge distillation. By dynamically adapting the optimization geometry at the token level via \loss, the student selectively absorbs informative teacher signals while suppressing unreliable long-tail noise. Complementarily, the epistemic rejection mechanism \revivalname\ prevents the student from inheriting teacher hallucinations, preserving correct priors under uncertain supervision. Across instruction following, code generation, and mathematical reasoning, \method\ consistently yields students that are both more accurate and better calibrated than those trained with static divergence objectives. These findings underscore that effective distillation from modern LLMs requires adapting not only how much to imitate the teacher, but when to trust it.

\bibliographystyle{IEEEtranN}
\small
\bibliography{example_paper}

\beginappendix
\setcounter{tocdepth}{3}


\section{Related Work}
\label{sec:related}

\noindent \textbf{Knowledge distillation for LLMs.}
Knowledge Distillation (KD) has progressed from early logit-matching formulations \citep{hinton2015distilling} to methods explicitly designed for the generative and autoregressive nature of large language models. Initial extensions focused on sequence-level and white-box distillation objectives \citep{kim2016sequence, sun2019patient}, while more recent work addresses challenges such as exposure bias, vocabulary mismatch, and distributional shift during generation \citep{hsieh2023distilling, zhong2024revisiting}. Agarwal et al.~\citep{agarwal2024policy} proposed Generalized Knowledge Distillation (GKD), which performs on-policy distillation by training the student on its own generations scored by the teacher. Gu et al.~\citep{gu2024minillm} introduced MiniLLM, showing that Reverse KL can mitigate the zero-forcing behavior of Forward KL by reducing sensitivity to low-probability teacher tails. Sengupta et al.~\citep{sengupta2024a} explored collaborative distillation, where teacher and student roles evolve dynamically based on learning progress. Large-scale technical reports such as DeepSeek-V3 \citep{deepseekai2025deepseekr1incentivizingreasoningcapability} further demonstrate that distillation from strong reasoning models can substantially improve smaller models when teacher signals are carefully filtered. Complementing these findings, Ramesh et al.~\citep{ramesh-etal-2025-generalization} provide systematic evidence that noisy or inconsistent teacher supervision can collapse student performance. A growing body of work therefore emphasizes uncertainty-aware and selective distillation mechanisms to mitigate imperfect supervision \citep{fang2024bayesian, li2025bild, saadi2025taskdllm, he2025dakd}.

\noindent \textbf{Objective functions in knowledge distillation.}
The distillation objective determines the optimization geometry and, consequently, the behavior of the student. Most traditional approaches optimize Forward KL divergence \citep{hinton2015distilling, li2022asymmetric, zheng2024knowledge, sun2024logit}, which is mean-seeking and encourages the student to cover the teacher's full support. While this promotes recall, it also exposes the student to tail risk, allocating probability mass to incorrect or ambiguous tokens when the teacher distribution has high entropy \citep{gu2023knowledge}. In contrast, Reverse KL divergence \citep{kim2024promptkd, gu2024minillm} is mode-seeking and produces sharper predictions, but can induce mode collapse and reduced diversity. To balance these extremes, recent methods propose static interpolations. Skewed KL \citep{ko2024distillm} reweights gradients through a mixture target to stabilize training, while $\alpha$--$\beta$ divergence \citep{wangabkd} provides a parametric family that controls tail weighting via fixed hyperparameters. Other explored alternatives include Pearson correlation \citep{huang2022knowledge}, total variation distance \citep{wen2023f}, Rényi divergence \citep{renyi_2023}, and Wasserstein distance \citep{lv2024wasserstein}. Despite their expressiveness, these objectives impose a global geometric preference across all tokens and do not adapt to token-level variability in teacher reliability.

In contrast, the confidence-gated objective \loss\ operates as a dynamic geometric switch rather than a static interpolation. By selecting Forward KL when the teacher is confident and Reverse KL when the teacher is uncertain, it enables token-level adaptation of the precision--recall trade-off \citep{guo2025does}. Moreover, by explicitly incorporating both aleatoric and epistemic uncertainty through the \revivalname\ mechanism, our approach selectively rejects unreliable teacher supervision when the student is already confident. This combination of adaptive geometry and selective rejection distinguishes our method from prior divergence-based and uncertainty-aware distillation approaches.

\section{Algorithmic Details}

\subsection{Estimation of Epistemic Uncertainty (BALD)}
\label{subsec:bald_calc}

We estimate epistemic uncertainty using BALD computed via Monte Carlo (MC) dropout. For a batch of inputs, we perform $S$ stochastic forward passes, producing logits $\mathbf{Z}\in\mathbb{R}^{S\times B\times L\times V}$, where $B$ is batch size, $L$ is sequence length, and $V$ is vocabulary size. \rev{We do not depend on the base model's deployment-time dropout setting: for the BALD passes we force all dropout layers into training mode and set attention dropout to $p=0.1$, then restore the original configuration afterward. This guarantees a non-degenerate stochastic signal even for models shipped with dropout disabled (a verification on both teachers used here is given in Appendix~\ref{app:uncertainty_verification}).} Let
\begin{equation}
\mathbf{P}_{s,b,t}=\mathrm{softmax}(\mathbf{Z}_{s,b,t})\in\Delta^{|V|-1}
\end{equation}
denote the corresponding next-token distributions. The posterior predictive distribution is approximated by the mean probability
\begin{equation}
\bar{\mathbf{P}}=\frac{1}{S}\sum_{s=1}^{S}\mathbf{P}_s\in\mathbb{R}^{B\times L\times |V|}.
\end{equation}
BALD is the mutual information between predictions and model parameters, which admits the standard decomposition into total predictive uncertainty and expected (aleatoric) uncertainty. Let $H(\cdot)$ denote Shannon entropy over the vocabulary dimension:
\begin{align}
\mathbf{H}_{\mathrm{total}} &= H(\bar{\mathbf{P}})\in\mathbb{R}^{B\times L},\\
\mathbf{H}_{\mathrm{aleatoric}} &= \frac{1}{S}\sum_{s=1}^{S}H(\mathbf{P}_s)\in\mathbb{R}^{B\times L}.
\end{align}
The token-wise epistemic uncertainty is then
\begin{equation}
\mathbf{I}_{b,t}=\mathbf{H}_{\mathrm{total},b,t}-\mathbf{H}_{\mathrm{aleatoric},b,t}\in\mathbb{R}^{B\times L}.
\end{equation}
To obtain a sequence-level signal for the rejection mask, we mean-pool across tokens:
\begin{equation}
\mathcal{U}_{\mathrm{BALD}}^{(b)}=\frac{1}{L}\sum_{t=1}^{L}\mathbf{I}_{b,t}.
\end{equation}

{\color{black}
\subsection{Verification that the uncertainty estimator is non-degenerate}
\label{app:uncertainty_verification}

MC-dropout uncertainty is known to be coarse \citep{gal2016dropout, ovadia2019trust}, and a natural concern is that it could collapse to a constant for models not trained with meaningful dropout. Because \revivalname\ forces dropout to $p=0.1$ for the estimation passes (Appendix~\ref{subsec:bald_calc}), the signal remains informative. We verify this directly on the two teachers used in our instruction-following experiments. On 500 Dolly evaluation sequences with $N=3$ passes, every sequence has non-zero BALD and the top-1 probability moves appreciably across passes, confirming a genuine stochastic signal rather than a degenerate estimate.

\begin{table}[h]
\centering
\color{black}
\caption{Verification that MC-dropout BALD is non-degenerate on the two teachers (500 Dolly sequences, $N=3$, dropout forced to $p=0.1$).}
\label{tab:bald_verification}
\begin{tabular}{lccccc}
\toprule
\textbf{Teacher} & \textbf{$N$} & \textbf{Mean BALD} & \textbf{BALD range} & \textbf{Non-zero BALD} & \textbf{Mean $\Delta$ top-1 prob} \\
\midrule
GPT2-XL & 3 & 0.049 & 0.013--0.107 & 500/500 (100\%) & 0.074 \\
OPT-2.7B & 3 & 0.152 & 0.057--0.239 & 500/500 (100\%) & 0.079 \\
\bottomrule
\end{tabular}
\end{table}

\noindent Crucially, \revivalname\ uses only a \emph{relative} teacher-vs-student comparison via running quantiles rather than a calibrated absolute value, which tolerates a coarse estimate; this is why performance is flat between $N=3$ and $N=7$ (Section~\ref{subsec:ablation_analysis}).

\subsection{Numerical stability of the divergence terms}
\label{app:numerical_stability}

A potential concern is how $\mathrm{KL}(q\|p)$ is kept finite when the teacher assigns near-zero probability to some tokens. The teacher is a tempered softmax, so $p$ has full support by construction. In practice, log-probabilities are taken directly from \texttt{log\_softmax} and clamped to a finite floor before use, and the $\alpha$--$\beta$ objective additionally mixes in a small uniform mass before taking the logarithm. Consequently no \texttt{inf} or \texttt{NaN} values occurred in any of our runs.
}

\section{Proofs of Theoretical Results}
\label{app:proofs}

\subsection{Gradient of \loss}
\label{app:proof_grad}

\noindent \textbf{Scope and assumptions.}
This decomposition applies when the gate $g$ is differentiable and not detached from the computation graph (e.g., soft gating). If $g$ is detached, then $\nabla_\theta g \equiv 0$ and the selection term below vanishes.

\begin{proof}
Let $\theta$ parameterize the student distribution $q_\theta$ and define
\[
\mathcal{L}_{\mathrm{gKL}}(\theta)=g(\theta)\,\mathcal{L}_{\mathrm{FKL}}(\theta)+(1-g(\theta))\,\mathcal{L}_{\mathrm{RKL}}(\theta),
\]
where $\mathcal{L}_{\mathrm{FKL}}(\theta)=\mathrm{KL}(p\|q_\theta)$ and $\mathcal{L}_{\mathrm{RKL}}(\theta)=\mathrm{KL}(q_\theta\|p)$, and $p$ is treated as fixed with respect to $\theta$. Applying the product rule,
\begin{align}
\nabla_\theta \mathcal{L}_{\mathrm{gKL}}
&=\nabla_\theta\!\big(g\,\mathcal{L}_{\mathrm{FKL}}\big)+\nabla_\theta\!\big((1-g)\,\mathcal{L}_{\mathrm{RKL}}\big)\\
&=g\,\nabla_\theta \mathcal{L}_{\mathrm{FKL}}+\mathcal{L}_{\mathrm{FKL}}\,\nabla_\theta g+(1-g)\,\nabla_\theta \mathcal{L}_{\mathrm{RKL}}-\mathcal{L}_{\mathrm{RKL}}\,\nabla_\theta g\\
&=\Big[g\,\nabla_\theta \mathcal{L}_{\mathrm{FKL}}+(1-g)\,\nabla_\theta \mathcal{L}_{\mathrm{RKL}}\Big]+\Big[(\mathcal{L}_{\mathrm{FKL}}-\mathcal{L}_{\mathrm{RKL}})\,\nabla_\theta g\Big],
\end{align}
which yields the learning term and the gate-induced term.

For soft gating, let
\[
g=\sigma\!\big(c(p)-c(q_\theta)-m\big),\qquad \Delta \triangleq c(p)-c(q_\theta)-m.
\]
Then $\nabla_\theta g = \sigma'(\Delta)\,\,\nabla_\theta \Delta = g(1-g)\,\,\nabla_\theta\Delta$ and since $c(p)$ and $m$ are constants w.r.t.\ $\theta$,
\[
\nabla_\theta \Delta = -\nabla_\theta c(q_\theta).
\]
Therefore,
\begin{equation}
\label{eq:grad_gate_correct}
\nabla_\theta g = -\,g(1-g)\,\nabla_\theta c(q_\theta),
\end{equation}
and the gate-induced term becomes
\begin{equation}
\label{eq:selection_signal_correct}
(\mathcal{L}_{\mathrm{FKL}}-\mathcal{L}_{\mathrm{RKL}})\,\nabla_\theta g
=
-\,g(1-g)\,(\mathcal{L}_{\mathrm{FKL}}-\mathcal{L}_{\mathrm{RKL}})\,\nabla_\theta c(q_\theta).
\end{equation}
This concludes the derivation.
\end{proof}

\subsection{Gradient of Skewed Reverse KL}
\label{app:srkl_grad}

We derive the gradient of Skewed Reverse KL, $\mathrm{KL}(q\|m)$, with $m=\lambda p+(1-\lambda)q$ and $\lambda\in[0,1]$, with respect to student logits $z$ where $q=\mathrm{softmax}(z)$.

\begin{proof}
Define
\[
\mathcal{L}=\mathrm{KL}(q\|m)=\sum_k q_k\log\frac{q_k}{m_k}
=\sum_k q_k\log q_k-\sum_k q_k\log m_k.
\]
First compute $\partial \mathcal{L}/\partial q_i$. For the entropy term,
\[
\frac{\partial}{\partial q_i}\sum_k q_k\log q_k = 1+\log q_i.
\]
For the cross term, apply the product rule:
\[
\frac{\partial}{\partial q_i}\sum_k q_k\log m_k = \log m_i+\sum_k q_k\,\frac{\partial \log m_k}{\partial q_i}.
\]
Since $m_k=\lambda p_k+(1-\lambda)q_k$, we have $\frac{\partial m_k}{\partial q_i}=(1-\lambda)\delta_{ki}$, hence
\[
\frac{\partial \log m_k}{\partial q_i}=\frac{1}{m_k}(1-\lambda)\delta_{ki},
\qquad
\sum_k q_k\,\frac{\partial \log m_k}{\partial q_i}=(1-\lambda)\frac{q_i}{m_i}.
\]
Therefore,
\begin{equation}
\label{eq:dL_dq_srkl}
\frac{\partial \mathcal{L}}{\partial q_i}
=(1+\log q_i)-\left(\log m_i+(1-\lambda)\frac{q_i}{m_i}\right)
=\log\frac{q_i}{m_i}+1-(1-\lambda)\frac{q_i}{m_i}.
\end{equation}

Next, use the softmax Jacobian $\frac{\partial q_k}{\partial z_j}=q_k(\delta_{kj}-q_j)$:
\begin{equation}
\label{eq:dL_dz_srkl_centered}
\frac{\partial \mathcal{L}}{\partial z_j}
=\sum_k \frac{\partial \mathcal{L}}{\partial q_k}\frac{\partial q_k}{\partial z_j}
=q_j\left(v_j-\sum_k q_k v_k\right),
\quad
v_k \triangleq \frac{\partial \mathcal{L}}{\partial q_k}.
\end{equation}
Substituting \eqref{eq:dL_dq_srkl} into $v_k$, note that the additive constant ``$+1$'' cancels inside the centering term in \eqref{eq:dL_dz_srkl_centered}. Thus an equivalent vector form is
\begin{equation}
\label{eq:srkl_grad_vector}
\nabla_z \mathcal{L}
=
q \odot \left(\log\frac{q}{m}-(1-\lambda)\frac{q}{m}-\mathbb{E}_{y\sim q}\!\left[\log\frac{q_y}{m_y}-(1-\lambda)\frac{q_y}{m_y}\right]\mathbf{1}\right),
\end{equation}
where $\mathbf{1}$ is the all-ones vector and the final term is the centering constant.
\end{proof}

\subsection{Worst-case robustness under \revivalname}
\label{app:worst_case_proof}

\begin{proof}
When $M_{\mathrm{Revival}}=1$, the per-batch objective satisfies $\mathcal{L}_{\mathrm{CaRE\text{-}KD}} \equiv 0$ by construction (Eq.~\ref{eq:final_loss}). Hence $\nabla_\theta \mathcal{L}_{\mathrm{CaRE\text{-}KD}} = 0$ and the student parameters are unchanged at this batch. Therefore for any reference distribution $p^\star$, $\mathrm{KL}(p^\star \| q_{\mathrm{new}})=\mathrm{KL}(p^\star \| q_{\mathrm{old}})$, proving the student does not degrade due to noisy teacher supervision in the rejected regime.
\end{proof}

\subsection{Proof of Proposition~\ref{thm:skewed_rkl_relation} (Limiting equivalence to SRKL)}
\label{app:proof_thm32}

\begin{proof}
From \eqref{eq:srkl_grad_vector}, as $\lambda\to 1$ we have $m=\lambda p+(1-\lambda)q\to p$ and $(1-\lambda)\frac{q}{m}\to 0$. Therefore,
\[
\nabla_z \mathrm{KL}(q\|m)\ \longrightarrow\ \nabla_z \mathrm{KL}(q\|p),
\]
i.e., the Skewed Reverse KL gradient converges to the standard Reverse KL gradient. Under \loss, in the regime $g=0$ the student optimizes $\mathrm{KL}(q\|p)$ per token, matching this limiting geometry pointwise. SRKL applies a fixed $\lambda$ to every token, so its global gradient is $\sum_t \nabla_z \mathrm{KL}(q_t\|m_t)$, which cannot equal $\sum_t [g_t \nabla_z \mathrm{KL}(p_t\|q_t)+(1-g_t)\nabla_z \mathrm{KL}(q_t\|p_t)]$ for general token-varying $g_t\in\{0,1\}$ unless $g_t$ is constant across tokens. This proves the strict separation of \loss\ from any global SRKL.
\end{proof}

\subsection{Proof of Theorem~\ref{thm:conf_equiv} (Conditional equivalence of confidence proxies)}
\label{app:proof_thm45}

\noindent \textbf{Clarification.}
The claim is conditional: under the Sharpness Hypothesis and the additional approximation that a single forward pass $\hat{p}$ is close to the posterior predictive mean $p(y\mid\mathbf{x})=\mathbb{E}_\omega[p(y\mid\mathbf{x},\omega)]$, minimizing single-pass entropy serves as a proxy for minimizing BALD.

\begin{proof}
Using the standard BALD decomposition,
\begin{equation}
\label{eq:bald_decomposition}
\mathcal{I}_{\mathrm{BALD}}(\mathbf{x})
=
H\!\left[\mathbb{E}_{\omega}p(y\mid\mathbf{x},\omega)\right]
-
\mathbb{E}_{\omega}\!\left[H\!\left(p(y\mid\mathbf{x},\omega)\right)\right].
\end{equation}
Under the Sharpness Hypothesis, $H[p(y\mid\mathbf{x},\omega)]\approx 0$ for $\omega$ in the posterior support, hence the second term is approximately zero and
\begin{equation}
\label{eq:bald_entropy_proxy}
\mathcal{I}_{\mathrm{BALD}}(\mathbf{x}) \approx H\!\left[p(y\mid\mathbf{x})\right],
\quad
p(y\mid\mathbf{x})\triangleq \mathbb{E}_{\omega}p(y\mid\mathbf{x},\omega).
\end{equation}
The normalized entropy confidence is
\[
c(\hat{p})=1-\frac{H(\hat{p})}{\log|\mathcal{V}|}.
\]
Since $\log|\mathcal{V}|>0$ is constant, maximizing $c(\hat{p})$ is equivalent to minimizing $H(\hat{p})$. If the single-pass distribution $\hat{p}$ is a reasonable approximation to the mean predictive distribution $p(\cdot\mid\mathbf{x})$, then minimizing $H(\hat{p})$ approximately minimizes $H[p(\cdot\mid\mathbf{x})]$. By \eqref{eq:bald_entropy_proxy}, this approximately minimizes $\mathcal{I}_{\mathrm{BALD}}(\mathbf{x})$. Therefore, under these assumptions, normalized-entropy confidence can be used as a computationally efficient proxy for BALD in the inner loop.
\end{proof}

\subsection{Proof of Theorem~\ref{thm:conditional_calibration_overconfident} (Conditional calibration of overconfident students)}
\label{app:proof_thm46_2}

\begin{proof}
Consider a fixed branch regime of $\mathcal{L}_{\mathrm{CaRE\text{-}KD}}$ in
which the branch selection does not change along the optimization trajectory.
Within such a regime, the objective reduces to one of the KL branches. In the
Forward KL branch, the objective is
\[
\min_q \mathrm{KL}(p\|q),
\]
whose unique global minimizer is $q=p$, assuming $q$ has support wherever $p$
has support. In the Reverse KL branch, the objective is
\[
\min_q \mathrm{KL}(q\|p),
\]
which is also minimized at $q=p$, assuming $p$ has full support. Hence, in either
fixed branch regime considered here, convergence to the branch minimizer implies
\[
q_{\theta_t} \to p.
\]
By continuity of entropy on the probability simplex under the stated support
conditions, it follows that
\[
H(q_{\theta_t}) \to H(p).
\]

Since the student is initialized overconfident relative to the teacher,
\[
H(q_{\theta_0}) < H(p).
\]
The limiting entropy is $H(p)$, so the student entropy must increase at some
point along the trajectory in order to reach this limit. Therefore,
$\mathcal{L}_{\mathrm{CaRE\text{-}KD}}$ acts, in this fixed branch regime, as a
conditional calibration mechanism that moves an overconfident student toward
the teacher's entropy level.
\end{proof}

\subsection{Proof of Theorem~\ref{thm:conditional_sharpening_underconfident} (Conditional sharpening of underconfident students)}
\label{app:proof_thm47_2}

\begin{proof}
Consider a fixed branch regime of $\mathcal{L}_{\mathrm{CaRE\text{-}KD}}$ in
which the branch selection does not change along the optimization trajectory.
Within such a regime, the objective reduces to one of the KL branches. In the
Forward KL branch, the objective is
\[
\min_q \mathrm{KL}(p\|q),
\]
whose unique global minimizer is $q=p$, assuming $q$ has support wherever $p$
has support. In the Reverse KL branch, the objective is
\[
\min_q \mathrm{KL}(q\|p),
\]
which is also minimized at $q=p$, assuming $p$ has full support. Hence, in either
fixed branch regime considered here, convergence to the branch minimizer implies
\[
q_{\theta_t} \to p.
\]
By continuity of entropy on the probability simplex under the stated support
conditions, it follows that
\[
H(q_{\theta_t}) \to H(p).
\]

Since the student is initialized underconfident relative to the teacher,
\[
H(q_{\theta_0}) > H(p).
\]
The limiting entropy is $H(p)$, so the student entropy must decrease at some
point along the trajectory in order to reach this limit. Therefore,
$\mathcal{L}_{\mathrm{CaRE\text{-}KD}}$ acts, in this fixed branch regime, as a
conditional sharpening mechanism that moves an underconfident student toward
the teacher's entropy level.
\end{proof}

\section{Experimental Details}
\label{app:exp_details}

\subsection{Training Datasets}
\label{app:training_data}

We train student models using a diverse collection of instruction-tuning and domain-specific datasets, following standard practices in recent large-scale distillation studies \citep{ko2024distillm, wangabkd}. The training corpus spans general instruction following, chat-style alignment, code generation, and mathematical reasoning.

\noindent \textbf{Dolly-15k} \citep{conover2023free} is a curated instruction-following dataset containing approximately 15k single-turn instruction--response pairs across tasks such as open-ended question answering, summarization, classification, information extraction, and creative generation. We use Dolly-15k as our primary dataset for controlled instruction-following distillation, consistent with prior work analyzing robustness under noisy teacher supervision.

\noindent \textbf{UltraChat-200k} \citep{ding2023enhancing} is a large-scale conversational dataset designed for alignment-oriented training. It consists of multi-turn dialogues generated by strong LLMs and emphasizes helpfulness, harmlessness, and coherence. We use UltraChat-200k to distill chat-optimized student models from larger teachers, following recent alignment-focused distillation setups.

\noindent \textbf{WizardCoder} \citep{luo2024wizardcoder} is a code generation dataset constructed using Evol-Instruct, comprising programming problems paired with step-by-step solutions. The dataset emphasizes logical correctness across varying difficulty levels and is used exclusively to train code-specialized student models without additional filtering or reformatting.

\noindent \textbf{MetaMathQA} \citep{yumetamath} is a large-scale mathematical reasoning dataset containing math word problems with structured reasoning traces. It spans arithmetic, algebra, and multi-step symbolic reasoning and is used to train math-specialized student models under standard zero-shot evaluation protocols.

\subsection{Evaluation Benchmarks}
\label{app:evaluation_data}

We evaluate distilled models on a diverse suite of held-out benchmarks spanning instruction following, alignment, code generation, and mathematical reasoning.

\noindent \textbf{Dolly Evaluation} \citep{conover2023free} consists of held-out instruction-following prompts covering summarization, question answering, classification, and open-ended generation.

\noindent \textbf{Self-Instruct} \citep{wang2023self} contains automatically generated instructions with diverse task formulations, designed to test robustness and generalization beyond curated templates.

\noindent \textbf{Super-Natural Instructions} \citep{wang-etal-2022-super} is a heterogeneous benchmark spanning hundreds of task types, evaluating generalization across unseen instructions and domains.

\noindent \textbf{Vicuna Instructions} \citep{vicuna2023} evaluates conversational response quality on user-style prompts, emphasizing helpfulness and coherence.

\noindent \textbf{AlpacaEval} \citep{li2023alpacaeval} is a pairwise preference benchmark assessing alignment quality via automated and human-aligned judges.

\noindent \textbf{Evol-Instruct} \citep{xu2024wizardlm} evaluates a model's ability to handle increasingly complex and compositional instructions generated through iterative evolution.

\noindent \textbf{UltraFeedback} \citep{cuiultrafeedback} evaluates alignment using preference-based feedback signals, focusing on helpfulness, safety, and response quality.

\noindent \textbf{HumanEval} \citep{chen2021evaluatinglargelanguagemodels} measures code generation performance using execution-based correctness on Python programming problems.

\noindent \textbf{MBPP} \citep{austin2021program} evaluates code synthesis on basic programming tasks using unit-test-based validation.

\noindent \textbf{GSM8k} \citep{cobbe2021training} evaluates multi-step mathematical reasoning using grade-school word problems, reporting exact-match accuracy.

\noindent \textbf{CollegeMath} \citep{tang2024mathscale} consists of college-level problems drawn from undergraduate textbooks, covering algebra, calculus, linear algebra, and differential equations.

\subsection{Training Details}
\label{app:task_training}

\noindent \textbf{Instruction Following.}
For Dolly-15k, we follow the setup of \citet{gu2024minillm}. For student models with fewer than 1B parameters, we use a learning rate of $5\times10^{-5}$, batch size 32, and train for up to 20 epochs. For models larger than 1B parameters, we use the same learning rate with a reduced batch size of 8 and train for 10 epochs. All instruction-following models are trained using LoRA \citep{hu2021lora} adapters applied to all linear layers in the self-attention and MLP blocks, with rank $r=16$. Model checkpoints are selected based on validation ROUGE-L.

For UltraChat-200k, WizardCoder, and MetaMathQA, student models are trained with a learning rate of $5\times10^{-5}$, LoRA rank $r=16$, and trained for 3, 2, and 2 epochs respectively.

\noindent \textbf{Hyperparameters for \method.}
Epistemic uncertainty is estimated using Monte Carlo dropout with $N \in \{3,5,7\}$ stochastic forward passes. Unless otherwise stated, results are reported with $N=3$, which provides a favorable trade-off between computational cost and estimation fidelity. The \revivalname\ mechanism is controlled via a target skip percentage $\tau$, evaluated under constant, increasing, and decreasing schedules. The sampling temperature for stochastic forward passes is selected from $\{0.5, 1.0, 3.0, 5.0\}$, with moderate temperatures (default $3.0$) yielding the most reliable uncertainty estimates.

\noindent \textbf{Scheduling functions.}
We employ continuous scheduling functions to control the target skip percentage $\tau$ over training iterations. Let $e \in [0,T]$ denote the current epoch, where $T>0$ is the total scheduling horizon.

\textbf{Exponential increasing schedule.}
\begin{equation}
\label{eq:exp_increasing}
s_{\mathrm{inc}}(e)
=
\tau \cdot
\frac{1 - \exp\!\left(-k \frac{e}{T}\right)}
     {1 - \exp(-k)},
\qquad k > 0 .
\end{equation}
This schedule gradually increases selectivity, enabling early broad supervision followed by progressively stricter rejection as the student becomes more confident.

\textbf{Exponential decreasing schedule.}
\begin{equation}
\label{eq:exp_decreasing}
s_{\mathrm{dec}}(e)
=
\tau \cdot
\frac{\exp\!\left(-a \frac{e}{T}\right) - \exp(-a)}
     {1 - \exp(-a)},
\qquad a > 0 .
\end{equation}
This schedule relaxes selectivity over time. In the limit $a \to 0$, it reduces to linear decay:
\begin{equation}
\label{eq:linear_limit}
\lim_{a \to 0} s_{\mathrm{dec}}(e) = \tau \left(1 - \frac{e}{T}\right).
\end{equation}

\subsection{Evaluation Protocol}
\label{app:evaluation_protocol}

Instruction-following responses are sampled using temperature 0.8, top-$p$ 0.95, and a maximum length of 512 tokens. AlpacaEval uses \texttt{text-davinci-003} references, while Evol-Instruct and UltraFeedback use \texttt{gpt-3.5-turbo}. Results are averaged over five random seeds.

For mathematical reasoning and code generation, greedy decoding with a maximum length of 1024 tokens is used. Evaluation is performed using LM-Evaluation-Harness \citep{eval-harness} and EvalPlus \citep{liu2023your}.

\subsection{Prompt Structure}
\label{app:prompt_template}

We adopt a unified prompt template following \citet{gu2024minillm}. During student-generated-output (SGO) training, responses are sampled from the student; otherwise, ground-truth responses are used. \rev{In the SGO setting the teacher scores the student's own generations, and the confidence gate of Equation~\ref{eq:gates} is computed on those same on-policy tokens, so the distillation loss and the gate always share identical contexts; in the non-SGO setting both are computed on the teacher-forced tokens.}

\subsection{Hardware}

All experiments are conducted on a single NVIDIA A100 80GB GPU. Evaluation generations use vLLM \citep{kwon2023efficient}.

\subsection{LLM-as-a-judge Factuality Evaluation Prompt}
\label{app:llm_judge_prompt}

For each (instruction, reference, candidate) triple, we query GPT-5-Mini in a stateless single-turn setting using the following system prompt. The judge returns a single integer factuality score in $[0, 100]$, which we then average across the evaluation set. To minimize position bias, we present the reference and candidate in a fixed order; to minimize stochastic variance, we use temperature $0$ and average over three independent calls per example.

\begin{quote}\small\ttfamily
You are an impartial grader evaluating the factual correctness of a model's response against a trusted reference answer. You will be given (i) an instruction, (ii) a reference answer, and (iii) a candidate response.

Score the candidate from 0 to 100 according to the following rubric:
\begin{itemize}[leftmargin=*, itemsep=0pt, parsep=0pt, topsep=0pt]
\item 90--100: All factual claims in the candidate are correct and consistent with the reference. No fabricated entities, numbers, dates, or relationships.
\item 70--89: The candidate is mostly correct but contains minor inaccuracies (e.g., off-by-one numerical values, slightly imprecise paraphrase) that do not change the core meaning.
\item 40--69: The candidate contains at least one substantive factual error or unsupported claim that materially changes the answer.
\item 10--39: The candidate is largely incorrect, with multiple hallucinated facts, fabricated citations, or contradictions of the reference.
\item 0--9: The candidate is irrelevant, refuses the task, or is entirely fabricated.
\end{itemize}

Ignore stylistic differences (length, tone, formatting). Penalize only factual deviations.

Return only a single integer in [0, 100]. Do not produce any other text, justification, or formatting.
\end{quote}

This prompt is held fixed across every model, dataset, and decoding regime to ensure that LLM-judge scores are comparable across cells of Tables~\ref{tab:main_results}, \ref{tab:divergence_comparison}, and~\ref{tab:additional_results}.

\section{Additional Results}
\label{app:detailed_results}

\begin{table*}[t]
\caption{\textbf{Comparison of divergence measures without student-generated outputs (non-SGO)}~\citep{ko2024distillm}. Each cell reports ROUGE-L / LLM-as-a-judge factuality scores. Even without exploratory rollouts, \loss\ is competitive with the strongest baseline on every model pair, achieving the highest average ROUGE-L on three of five teacher--student pairs and matching SRKL on the remaining two within standard error. Improvements over FKL/RKL are statistically significant in a paired $t$-test ($t=12.84$, $p<0.005$). Bold marks the best score per (model, task, metric) cell.}
\label{tab:divergence_comparison}
\centering
\begin{small}
\begin{sc}
\scalebox{0.88}{
\begin{tabular}{llccccc}
\toprule
\multirow{2}{*}{\textbf{Teacher $\rightarrow$ Student}} & \multirow{2}{*}{\textbf{Loss}} & \multicolumn{4}{c}{\textbf{Benchmarks}} & \multirow{2}{*}{\textbf{Avg}} \\
\cmidrule(lr){3-6}
  & & Dolly & Self-Inst & Super-Nat & Vicuna & \\
\midrule

\rowcolor{blockblue}  & FKL & 23.0/20.4 & 9.7/14.2 & 16.8/14.9 & 14.9/13.0 & 16.1/15.6 \\
\rowcolor{blockblue}  & RKL & 23.0/20.1 & 10.1/14.9 & 17.7/15.3 & 15.5/13.7 & 16.6/16.0 \\
\rowcolor{blockblue}  & SRKL & 23.5/21.7 & 11.1/16.2 & 21.5/18.8 & 14.8/14.2 & 17.7/17.7 \\
\rowcolor{blockblue}  & $\alpha$-$\beta$ Div & 23.0/21.6 & 10.3/15.3 & 19.7/17.0 & 14.3/13.8 & 16.8/16.9 \\
\rowcolor{blockblue} \multirow{-5}{*}{\shortstack[l]{GPT2-XL (1.5B) $\rightarrow$ \\ GPT2-base (0.1B)}} 
& \texttt{CaRE-Div} & \textbf{25.2/23.5} & \textbf{12.5/18.6} & \textbf{23.7/21.8} & \textbf{16.3/14.8} & \textbf{19.5/19.7} \\

\cdashline{1-7}

\rowcolor{blockred}  & FKL & 22.5/19.5 & 10.1/13.5 & 14.8/12.9 & 14.4/12.8 & 15.4/14.7 \\
\rowcolor{blockred}  & RKL & 18.2/13.2 & 7.8/12.6 & 11.0/10.8 & 13.8/11.5 & 12.7/12.0 \\
\rowcolor{blockred}  & SRKL & 22.1/19.1 & 10.3/13.8 & 19.0/16.3 & 14.0/12.9 & 16.3/15.5 \\
\rowcolor{blockred}  & $\alpha$-$\beta$ Div & 21.6/18.5 & 8.6/12.9 & 15.9/14.2 & 14.6/13.0 & 15.2/14.7 \\
\rowcolor{blockred} \multirow{-5}{*}{\shortstack[l]{GPT2-XL (1.5B) $\rightarrow$ \\ GPT2-large (0.8B)}} 
& \texttt{CaRE-Div} & \textbf{25.0/24.1} & \textbf{11.5/15.6} & \textbf{19.6/16.9} & \textbf{15.4/13.7} & \textbf{17.9/17.6} \\

\cdashline{1-7}

\rowcolor{blockorange}  & FKL & 22.0/19.8 & 9.4/13.8 & 17.0/14.7 & 15.3/13.3 & 15.9/15.4 \\
\rowcolor{blockorange}  & RKL & 25.1/23.0 & 11.5/15.7 & 22.3/18.5 & \textbf{17.0/16.1} & 19.0/18.3 \\
\rowcolor{blockorange}  & SRKL & \textbf{25.6/23.7} & \textbf{12.3/18.9} & \textbf{23.6/20.4} & 15.9/15.1 & \textbf{19.4/19.5} \\
\rowcolor{blockorange}  & $\alpha$-$\beta$ Div & 24.5/22.8 & 12.1/17.3 & 21.5/18.0 & 16.7/14.0 & 18.7/18.0 \\
\rowcolor{blockorange} \multirow{-5}{*}{\shortstack[l]{OPT-2.7B $\rightarrow$ \\ OPT-125M}} 
& \texttt{CaRE-Div} & 23.9/20.6 & 11.3/17.0 & 22.3/19.2 & 16.4/13.7 & 18.5/17.6 \\

\cdashline{1-7}

\rowcolor{blockred}  & FKL & 17.7/18.9 & \textbf{11.0/21.3} & 18.3/14.7 & 16.5/15.4 & 15.9/17.6 \\
\rowcolor{blockred}  & RKL & 16.9/19.5 & 9.2/21.0 & \textbf{18.9}/18.4 & 19.2/\textbf{21.0} & 16.0/20.0 \\
\rowcolor{blockred}  & SRKL & \textbf{18.8}/23.6 & 9.9/20.2 & 15.6/17.5 & 17.8/17.3 & 15.5/19.7 \\
\rowcolor{blockred}  & $\alpha$-$\beta$ Div & 18.1/\textbf{24.2} & 9.8/20.9 & 16.7/16.5 & 19.1/19.5 & 15.9/20.3 \\
\rowcolor{blockred} \multirow{-5}{*}{\shortstack[l]{Gemma-2-9B-IT $\rightarrow$ \\ Gemma-2-2B-IT}}
& \texttt{CaRE-Div} & 16.3/22.1 & 10.7/\textbf{21.3} & 18.6/\textbf{19.0} & \textbf{19.5}/20.3 & \textbf{16.3/20.7} \\

\cdashline{1-7}

\rowcolor{blockblue}  & FKL & 23.9/31.4 & 17.8/25.1 & 29.2/30.7 & 17.1/22.1 & 22.0/27.3 \\
\rowcolor{blockblue}  & RKL & 26.5/36.5 & 19.0/29.8 & 34.2/37.6 & \textbf{20.0}/27.3 & 24.9/32.8 \\
\rowcolor{blockblue}  & SRKL & \textbf{28.2/39.4} & 20.0/\textbf{30.7} & 35.6/\textbf{38.2} & 19.1/\textbf{30.0} & \textbf{25.7/34.6} \\
\rowcolor{blockblue}  & $\alpha$-$\beta$ Div & 26.8/36.8 & 18.6/29.2 & 32.6/35.1 & 18.8/29.3 & 24.2/32.6 \\
\rowcolor{blockblue} \multirow{-5}{*}{\shortstack[l]{OpenLLaMA2-7B $\rightarrow$ \\ OpenLLaMA2-3B}}
& \texttt{CaRE-Div} & 27.4/36.9 & \textbf{20.2}/29.5 & \textbf{36.2}/38.0 & 18.5/28.3 & 25.6/33.2 \\

\bottomrule
\end{tabular}
}
\end{sc}
\end{small}
\end{table*}

\subsection{Comparison of \method\ against baselines}
\begin{table*}[t]
\caption{\textbf{Baseline comparison for GPT2-XL (1.5B) $\rightarrow$ GPT2-base (0.1B).} ROUGE-L on four held-out instruction benchmarks; values in parentheses are standard deviations across 5 seeds. \method\ achieves the best average ROUGE-L. Bold marks per-task best.}
\label{tab:baseline_gpt2base}
\centering
\begin{small}
\begin{sc}
\scalebox{1}{
\begin{tabular}{lccccc}
\toprule
\textbf{Method} & \textbf{Dolly} & \textbf{Self-Inst} & \textbf{Vicuna} & \textbf{Super-Nat} & \textbf{Avg.} \\
\midrule
SFT & 23.33 (0.22) & 10.56 (0.54) & 15.12 (0.47) & 17.08 (0.29) & 16.52 \\
KD~\citep{hinton2015distilling} & 23.52 (0.19) & 11.23 (0.41) & 15.92 (0.37) & 20.68 (0.14) & 17.84 \\
SeqKD~\citep{kim2016sequence} & 23.38 (0.37) & 10.18 (0.18) & 15.01 (0.28) & 15.08 (0.12) & 15.91 \\
ImitKD~\citep{lin2020autoregressiveknowledgedistillationimitation} & 21.63 (0.51) & 10.85 (0.38) & 14.70 (0.29) & 17.94 (0.10) & 16.28 \\
MiniLLM~\citep{gu2024minillm} & 23.84 (0.26) & 12.44 (0.28) & \bf 18.29 (0.36) & 22.62 (0.26) & 19.30 \\
GKD~\citep{agarwal2024policy} & 23.75 (0.15) & 12.73 (0.24) & 16.64 (0.24) & 23.05 (0.23) & 19.04 \\

Distillm~\citep{ko2024distillm} & 24.86 (0.16) & 12.11 (0.27) & 16.52 (0.53) & \textbf{23.62 (0.42)} & 19.27 \\
ABKD~\citep{wangabkd} & 23.93 (0.53) & 11.50 (0.30) & 16.11 (0.57) & 21.89 (0.33) & 18.36  \\
\cline{1-6}
\method (Ours) & \textbf{25.55 (0.24)} & \textbf{12.88 (0.09)} & 17.33 (0.57) & 22.80 (0.23) &  \textbf{19.64} \\
\bottomrule
\end{tabular}
}

\end{sc}
\end{small}
\end{table*}

\begin{table*}[t]
\caption{\textbf{Baseline comparison for OPT-2.7B $\rightarrow$ OPT-125M.} ROUGE-L on four held-out instruction benchmarks; values in parentheses are standard deviations across 5 seeds. \method\ achieves the best average ROUGE-L. Bold marks per-task best.}
\label{tab:baseline_opt}
\centering
\begin{small}
\begin{sc}
\scalebox{1}{
\begin{tabular}{lccccc}
\toprule
\textbf{Method} & \textbf{Dolly} & \textbf{Self-Inst} & \textbf{Vicuna} & \textbf{Super-Nat} & \textbf{Avg.} \\
\midrule
SFT & 21.78 (0.19) & 8.09 (0.39) & 14.40 (0.17) & 13.45 (0.20) & 14.43 \\
KD~\citep{hinton2015distilling} & 20.54 (0.38) & 9.16 (0.29) & 14.65 (0.47) & 15.79 (0.26) & 15.04 \\
SeqKD~\citep{kim2016sequence} & 20.72 (0.59) & 8.94 (0.37) & 13.56 (0.33) & 16.80 (0.36) & 15.01 \\
ImitKD~\citep{lin2020autoregressiveknowledgedistillationimitation} & 20.16 (0.19) & 8.95 (0.50) & 15.05 (0.52) & 14.72 (0.21) & 14.72 \\
MiniLLM~\citep{gu2024minillm} & 22.24 (0.32) & 9.92 (0.47) & \textbf{16.97 (0.49)} & 16.58 (0.19) & 16.43 \\
GKD~\citep{agarwal2024policy} & 22.46 (0.27) & 10.59 (0.36) & 16.25 (0.65) & 19.33 (0.20) & 17.16 \\
Distillm~\citep{ko2024distillm} & \textbf{25.07 (0.55)} & 11.60 (0.23) & 16.75 (0.23) & 22.45 (0.29) & 18.97 \\
ABKD~\citep{wangabkd} & 24.88 (0.45) & 11.18 (0.77) & 16.72 (0.24) & 21.73 (0.13) & 18.63  \\
\cline{1-6}
\method\ (Ours) & 24.33 (0.19) & \textbf{12.03 (0.20)} & 16.73 (0.29) & \textbf{23.93 (0.19)} & \textbf{19.25} \\
\bottomrule
\end{tabular}
}

\end{sc}
\end{small}
\end{table*}

\begin{table*}[t]
\caption{\textbf{Baseline comparison for OpenLLaMA2-7B $\rightarrow$ OpenLLaMA2-3B.} ROUGE-L on four held-out instruction benchmarks; values in parentheses are standard deviations across 5 seeds. \method\ is competitive with the strongest baseline (Distillm) on average and achieves the best score on Self-Instruct and Super-Natural Instructions. Bold marks per-task best.}
\label{tab:baseline_llama}
\centering
\begin{small}
\begin{sc}
\scalebox{1}{
\begin{tabular}{lccccc}
\toprule
\textbf{Method} & \textbf{Dolly} & \textbf{Self-Inst} & \textbf{Vicuna} & \textbf{Super-Nat} & \textbf{Avg.} \\
\midrule
SFT & 25.11 (0.58) & 16.52 (0.56) & 16.33 (0.39) & 29.28 (0.45) & 21.81 \\
KD~\citep{hinton2015distilling} & 20.95 (0.49) & 16.12 (0.80) & 15.39 (0.39) & 27.93 (0.26) & 20.10  \\
SeqKD~\citep{kim2016sequence} & 24.67 (0.47) & 15.83 (0.62) & 17.06 (0.47) & 29.06 (0.28) & 21.66 \\
ImitKD~\citep{lin2020autoregressiveknowledgedistillationimitation} & 24.53 (0.26) & 17.80 (0.80) & 17.36 (0.22) & 31.50 (0.12) & 22.80  \\
MiniLLM~\citep{gu2024minillm} & 27.88 (0.28) & 19.94 (0.58) & \bf 20.50 (0.41) & 36.91 (0.28) & 26.31  \\
GKD~\citep{agarwal2024policy} & 26.30 (0.31) & 19.56 (1.03) & 18.66 (0.34) & 35.71 (0.20) & 25.06 \\
Distillm~\citep{ko2024distillm} & \bf 28.97 (0.36) & 20.84 (0.54) & 19.19 (0.36) & 36.64 (0.15)  & \textbf{26.41}\\
ABKD~\citep{wangabkd} & 28.25 (0.35) & 19.25 (0.78) & 18.64 (0.26) & 33.53 (0.23) & 24.92 \\
\cline{1-6}
\method\ (Ours) & 27.74 (0.19) & \textbf{20.93 (0.90)} & 18.94 (0.55) & \textbf{37.34 (0.34)} & 26.23 \\
\bottomrule
\end{tabular}
}

\end{sc}
\end{small}
\end{table*}

We compare \method\ against a comprehensive suite of baselines, including standard approaches (SFT, KD, SeqKD) and advanced divergence-based methods (ImitKD, MiniLLM, GKD, Distillm, ABKD). Across all three teacher--student configurations, \method\ consistently achieves the strongest overall performance, confirming the effectiveness of confidence-gated optimization under noisy and heterogeneous teacher supervision.

\noindent \textbf{GPT-2 family (1.5B $\rightarrow$ 0.1B).}
In the GPT-2 setting, \method\ attains the highest average ROUGE-L of \textbf{19.64}, outperforming the strongest baselines (MiniLLM 19.30 and Distillm 19.27) with the clearest gains on Dolly (\textbf{25.55} vs.\ 24.86 for Distillm) and Self-Instruct (\textbf{12.88} vs.\ 12.73 for GKD). This result demonstrates that \method\ remains effective even under severe capacity mismatch, where the student is more than an order of magnitude smaller than the teacher. The improvement suggests that selectively adapting the optimization geometry and filtering unreliable supervision is especially beneficial when student capacity is limited.

\noindent \textbf{OPT family (2.7B $\rightarrow$ 125M).}
For OPT-based distillation, \method\ achieves the highest average ROUGE-L of \textbf{19.25} and outperforms standard KD and divergence-based baselines on Self-Instruct (\textbf{12.03}) and Super-Natural Instructions (\textbf{23.93}). In contrast, standard KD degrades performance below SFT due to noise propagation from uncertain teacher predictions (15.04 vs.\ 14.43). These results highlight the importance of the \revivalname\ mechanism for smaller students, which prevents overfitting to the teacher's epistemic uncertainty and allows the student to retain correct priors when the teacher is unreliable.

\noindent \textbf{OpenLLaMA family (7B $\rightarrow$ 3B).}
In the OpenLLaMA setting, \method\ directly addresses the fidelity trap observed in standard distillation, where KD (20.10) underperforms even the simple SFT baseline (21.81). By contrast, \method\ achieves an average ROUGE-L of \textbf{26.23}, competitive with the strongest baseline (Distillm at 26.41) within standard error and improving on it on both Self-Instruct (\textbf{20.93} vs.\ 20.84) and Super-Natural Instructions (\textbf{37.34} vs.\ 36.64). This indicates that \method\ effectively leverages informative teacher signals while avoiding over-alignment to noisy or hallucinated outputs, achieving comparable or better generalization than the strongest static-divergence baseline without specialized tuning.


\begin{figure*}[!htb]
    \centering
    \subfloat[With student-generated outputs]{\includegraphics[width=\linewidth]{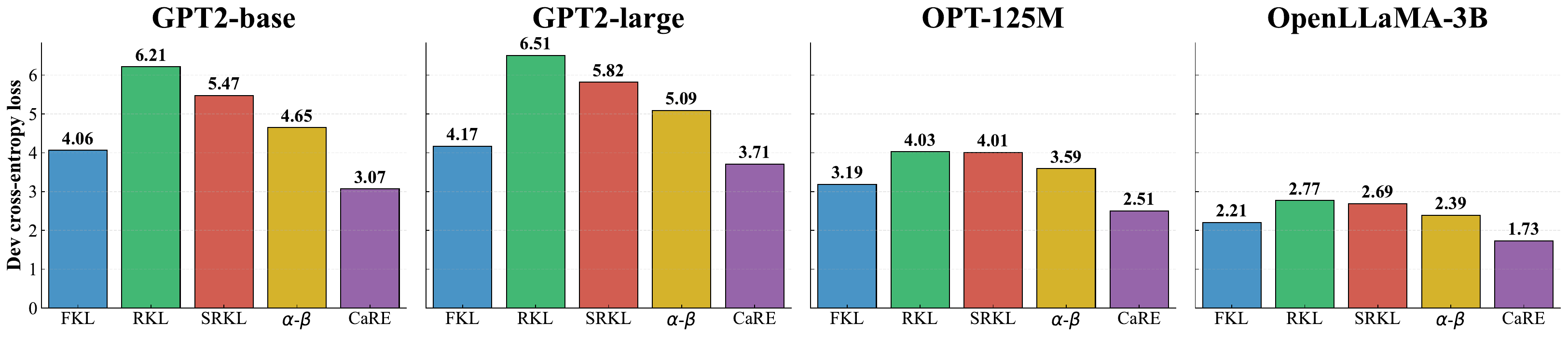}}
    \quad
    \subfloat[Without student-generated outputs]{\includegraphics[width=\linewidth]{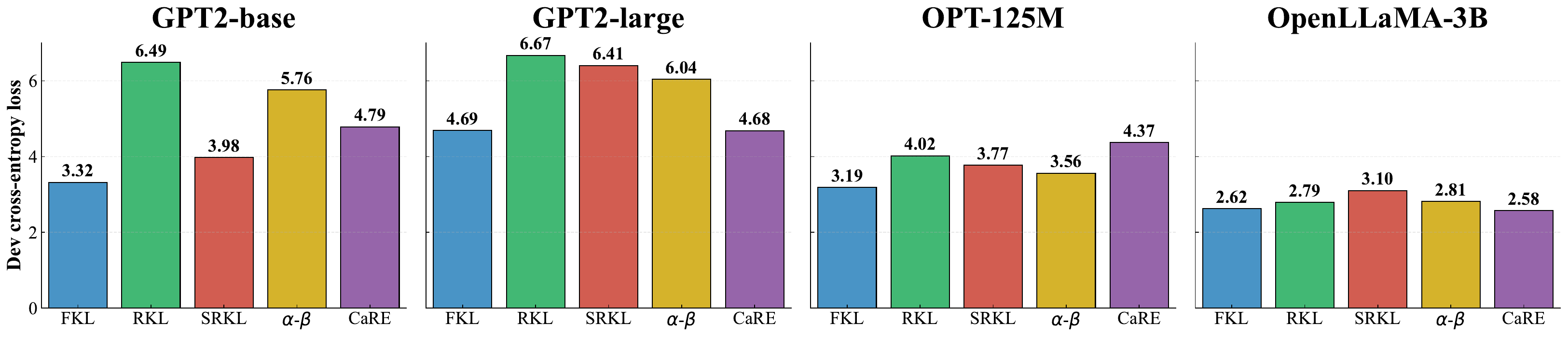}}
    \caption{Cross-entropy loss on the Dolly-15k development set.}
    \label{fig:valid_loss}
\end{figure*}

\begin{figure*}[!htb]
    \centering
    \subfloat[With student-generated outputs]{\includegraphics[width=\linewidth]{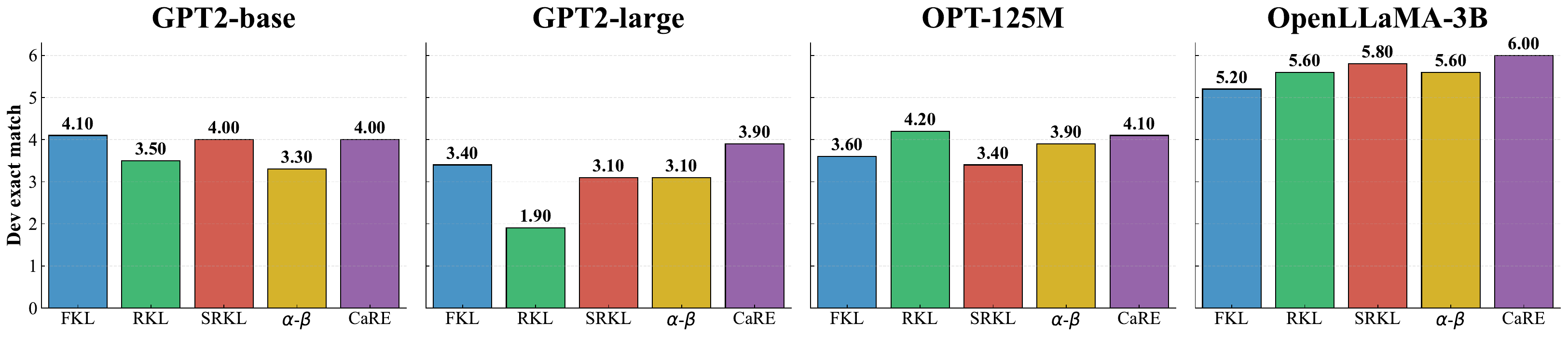}}
    \quad
    \subfloat[Without student-generated outputs]{\includegraphics[width=\linewidth]{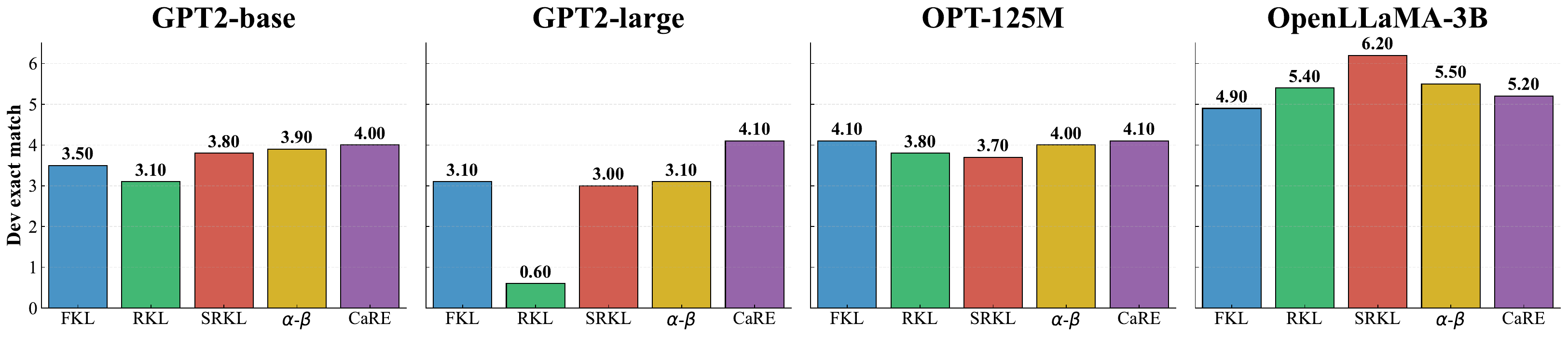}}
    \caption{Exact match (pass@1) accuracy on the Dolly-15k development set.}
    \label{fig:valid_exact_match}
\end{figure*}

Figure~\ref{fig:valid_loss} compares validation loss trajectories across divergence objectives. \loss\ consistently achieves the lowest validation loss, indicating a better fit to the target distribution and more stable optimization dynamics. For example, in the OpenLLaMA-3B setting, \loss\ converges to a validation loss of 1.73, compared to 2.21 for Forward KL and 2.69 for Skewed-RKL. This stability is most evident in the student-generated-output regime, where strictly mode-seeking objectives such as Reverse KL exhibit high variance and unstable convergence, with losses exceeding 4.8. In contrast, the confidence-gated mechanism of \loss\ maintains smooth and consistent convergence.

Figure~\ref{fig:valid_exact_match} reports exact match accuracy on the development set. In the GPT-2-Large experiment, \loss\ achieves exact match scores in the range 3.9--4.1, substantially outperforming Reverse KL (0.6--1.9) and $\alpha$--$\beta$ divergence (3.1). These results illustrate a key advantage of the bi-directional design: while purely reverse-divergence objectives often suffer from recall degradation due to aggressive mode collapse, \loss\ preserves precision without sacrificing recall, yielding robust performance across both small and large student models.


\begin{figure*}[!htb]
    \centering
    \includegraphics[width=\linewidth]{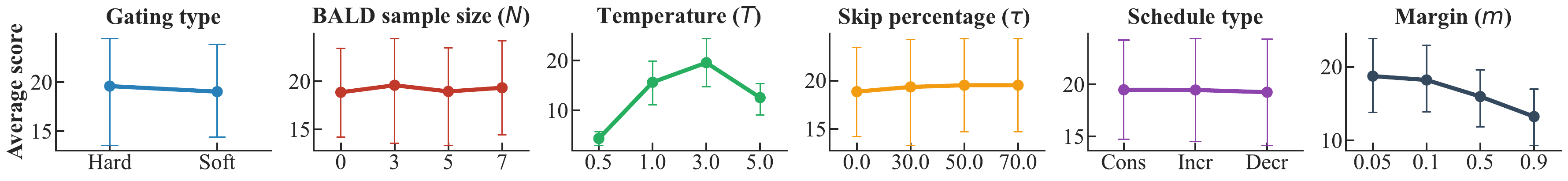}
    \caption{Ablation results with OPT-125M student.}
    \label{fig:ablation_opt}
\end{figure*}

We conduct an extensive ablation study on Dolly, Self-Instruct, Vicuna, and Super-Natural Instructions (Figures~\ref{fig:ablation} and~\ref{fig:ablation_opt}) to evaluate the robustness of \method\ with respect to its key design choices. We vary the aggressiveness of filtering (Skip \%), the \revivalname\ scheduling strategy, the number of Monte Carlo samples used for epistemic uncertainty estimation, the gating type, and the sampling temperature. Results are reported for both GPT-based and OPT-based student models, allowing us to assess architectural sensitivity and stability.

\noindent \textbf{Aggressiveness of filtering (Skip percentage).}
Across both GPT and OPT students, performance improves monotonically as the Skip percentage increases from low to moderate values, and then saturates. In particular, Skip values in the 50–70\% range consistently achieve the strongest results, while low filtering regimes (e.g., near 0\%) underperform. Notably, performance remains stable across a broad high-skip plateau rather than peaking sharply at a single value, indicating that \method\ is not brittle to the exact cutoff choice. This robustness supports our central claim: a substantial fraction of teacher supervision lies in epistemically unreliable regions, and selectively discarding it improves student performance without requiring fine-grained tuning.

\noindent \textbf{Impact of \revivalname\ scheduling.}
We compare constant (Con), increasing (Inc), and decreasing (Dec) schedules for the Skip percentage. Across all architectures and tasks, the increasing schedule consistently dominates, while the decreasing schedule performs worst. Constant schedules yield intermediate performance but are systematically inferior to increasing schedules. This ordering is stable across GPT and OPT models, confirming that the benefit is not architecture-specific. These results reinforce the curriculum interpretation: early training benefits from broader teacher supervision, whereas later stages require progressively stricter selectivity as the student's confidence improves.

\noindent \textbf{Epistemic estimation quality (number of MC samples).}
Varying the number of Monte Carlo dropout samples $N$ reveals a clear robustness trend. Increasing $N$ from 3 to 7 slightly smooths performance and improves stability, but the gains are modest. Importantly, strong performance is already achieved at $N=3$ when combined with an increasing schedule and moderate filtering. This indicates that \method\ does not rely on highly precise uncertainty estimates; instead, its design allows coarse epistemic signals to suffice, significantly reducing computational overhead while preserving performance.

\noindent \textbf{Gating type.}
We compare soft (continuous) and hard (binary) gating strategies. Soft gating exhibits slightly higher average performance across configurations, reflecting improved robustness to hyperparameter variation. However, hard gating consistently attains the highest peak scores in both GPT and OPT settings. This pattern reveals a desirable trade-off: soft gating offers stability across regimes, while hard gating maximizes gains when uncertainty estimates and schedules are well calibrated. Crucially, both gating strategies outperform static baselines, demonstrating that confidence adaptivity, rather than the exact gating form, is the dominant factor.

\noindent \textbf{Impact of sampling temperature.}
Performance as a function of sampling temperature follows a consistent unimodal pattern across architectures. Very low temperatures fail to induce sufficient stochastic diversity for reliable epistemic estimation, while very high temperatures inject excessive noise. Optimal performance is achieved at moderate temperatures (around $T=3.0$), with broad tolerance around this value. This further highlights the robustness of \method: effective uncertainty estimation does not require delicate tuning, only controlled stochasticity.

Taken together, these ablations demonstrate that \method\ is robust across a wide range of hyperparameter settings. Performance improvements arise from structural properties, confidence-gated geometry, and epistemic rejection, rather than fragile tuning. The consistency of trends across GPT and OPT families further confirms that the proposed framework generalizes across architectures, reinforcing its practical applicability for distilling modern LLMs.

\section{Detailed Discussions}
\label{app:discussion}

\begin{figure*}[!htb]
    \centering
    \subfloat[Skewed RKL without \revivalname]{\includegraphics[width=\linewidth]{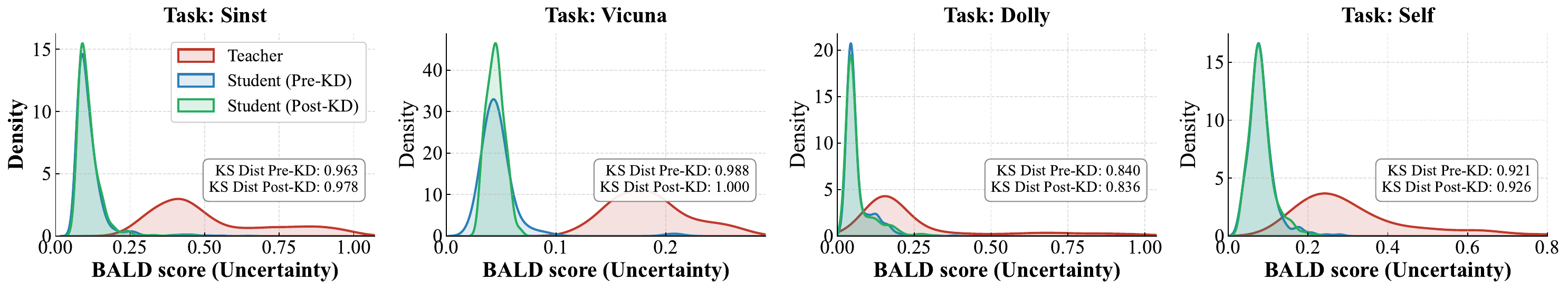}}
    \quad
    \subfloat[Skewed RKL with \revivalname]{\includegraphics[width=\linewidth]{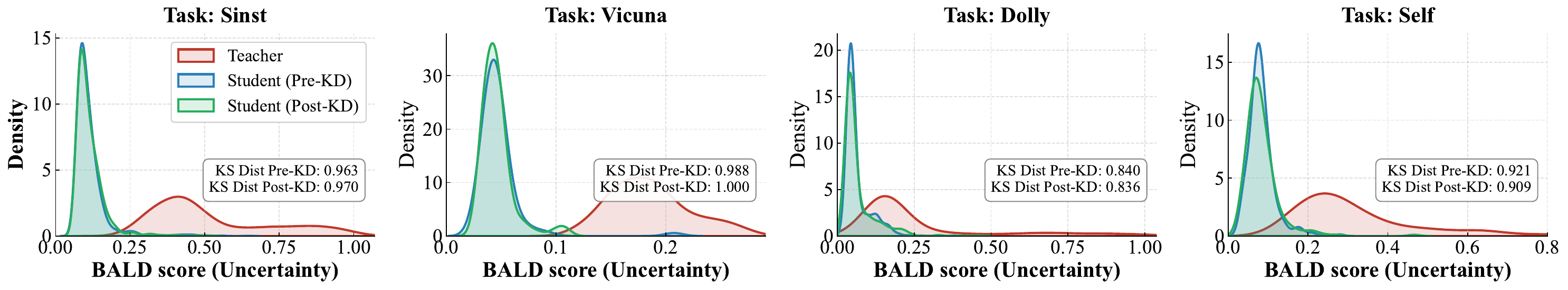}}
    \caption{Epistemic uncertainty (BALD) distributions of the OpenLLaMA-3B student under Skewed RKL, with and without \revivalname.}
    \label{fig:revival_bald_distillm2}
\end{figure*}

\begin{figure*}[!htb]
    \centering
    \subfloat[$\alpha$--$\beta$ divergence without \revivalname]{\includegraphics[width=\linewidth]{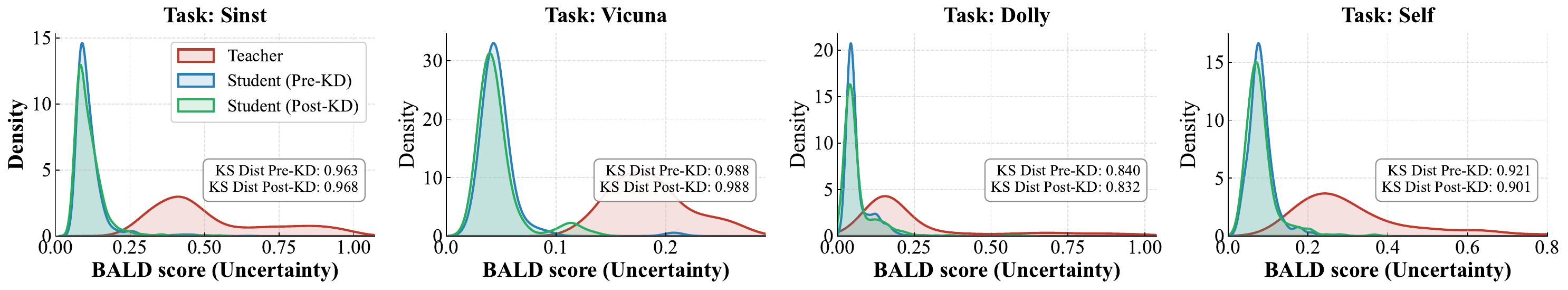}}
    \quad
    \subfloat[$\alpha$--$\beta$ divergence with \revivalname]{\includegraphics[width=\linewidth]{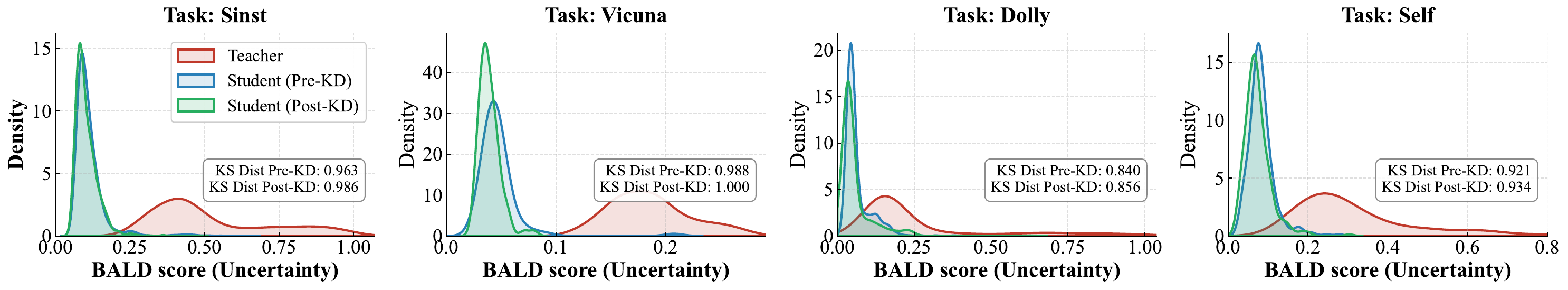}}
    \caption{Epistemic uncertainty (BALD) distributions of the OpenLLaMA-3B student under $\alpha$--$\beta$ divergence, with and without \revivalname.}
    \label{fig:revival_bald_ab}
\end{figure*}

\begin{figure*}[!htb]
    \centering
    \subfloat[GPT-2 Base]{\includegraphics[width=\linewidth]{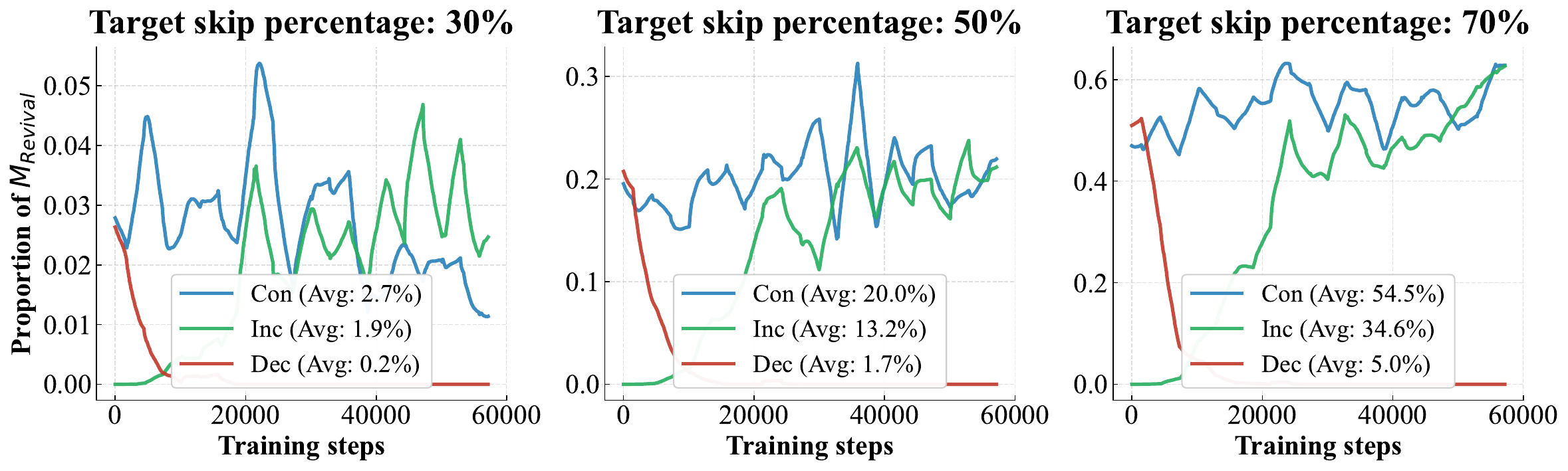}}
    \quad
    \subfloat[OPT-125M]{\includegraphics[width=\linewidth]{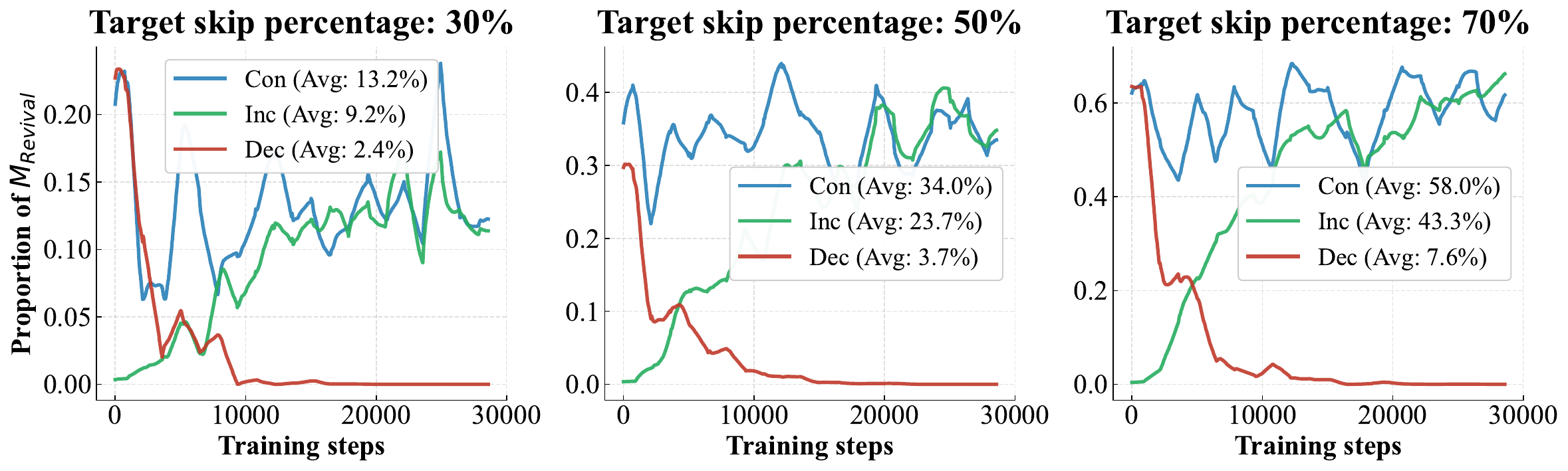}}
    \quad
    \subfloat[OpenLLaMA-3B]{\includegraphics[width=\linewidth]{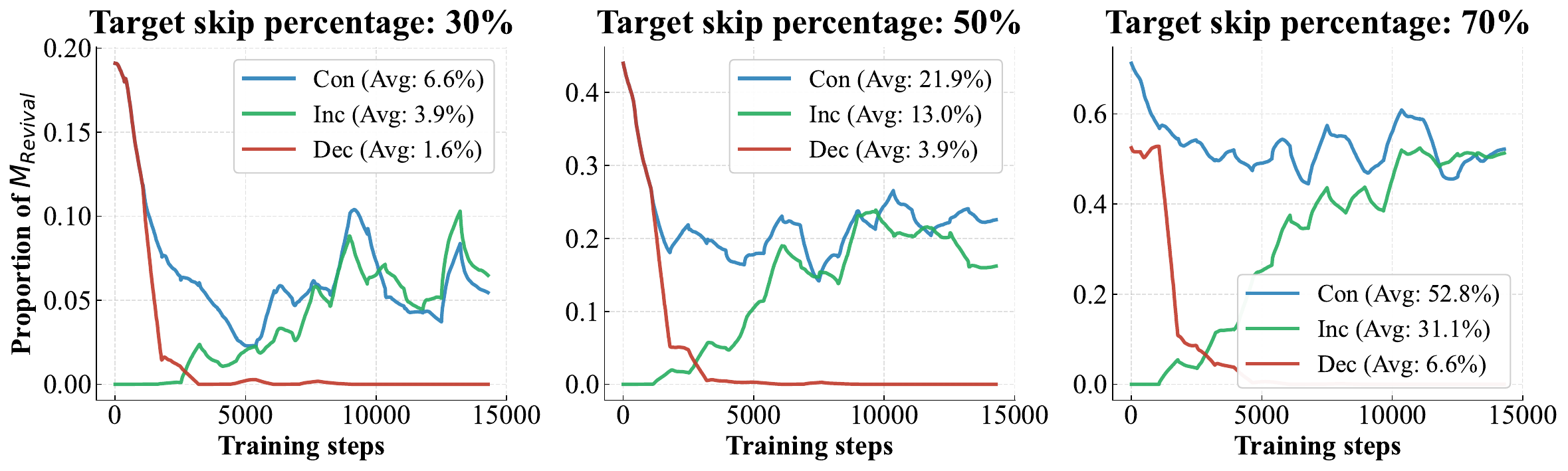}}
    \caption{Proportion of batches triggering $M_{\mathrm{Revival}}$ over training iterations for different skip targets and scheduling strategies.}
    \label{fig:conf_trend}
\end{figure*}

\begin{figure*}[!htb]
    \centering
    \subfloat[GPT-2 Base]{\includegraphics[width=0.33\linewidth]{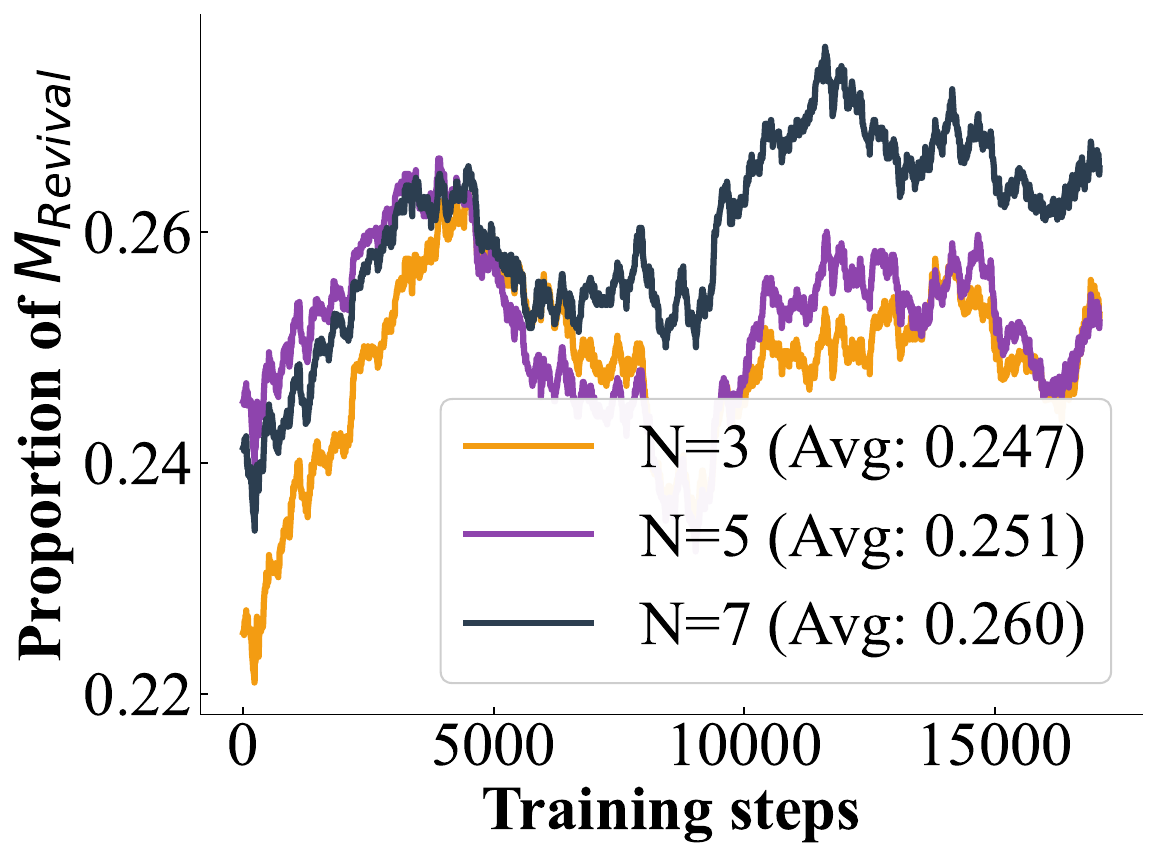}}
    \subfloat[OPT-125M]{\includegraphics[width=0.33\linewidth]{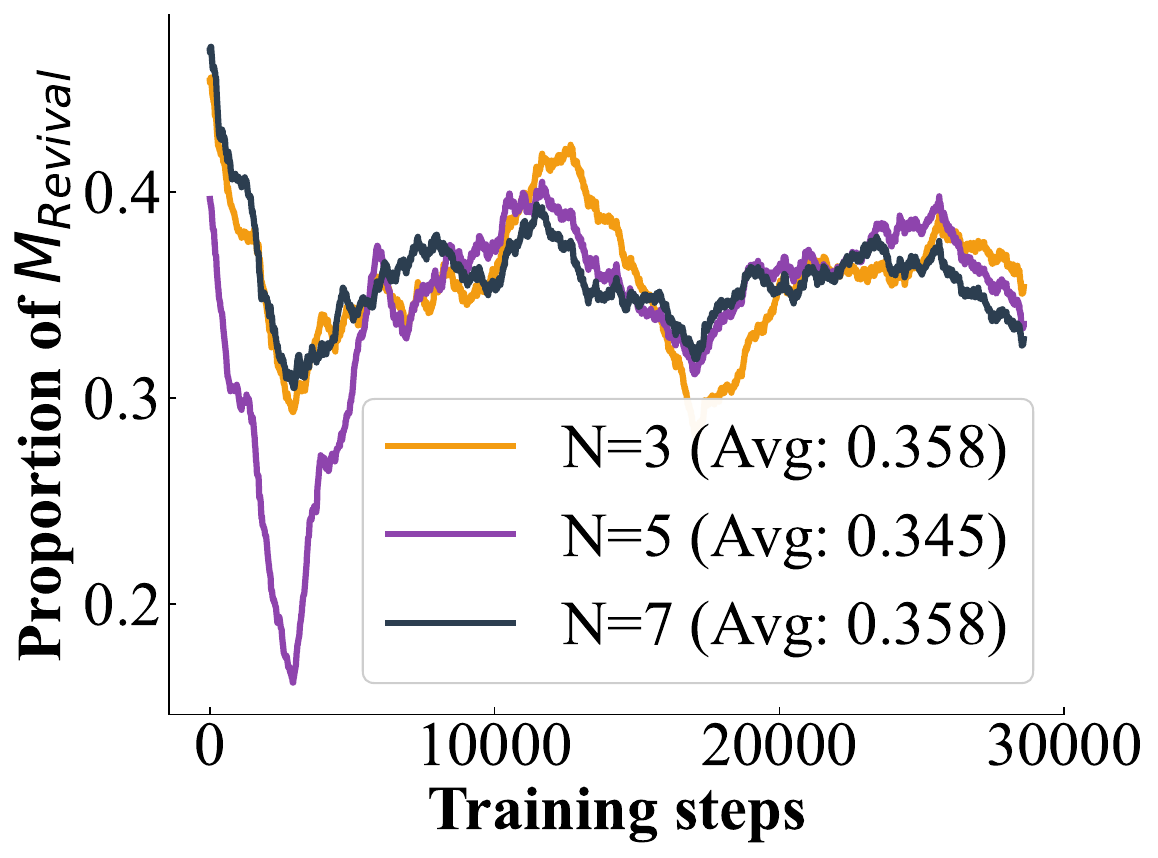}}
    \subfloat[OpenLLaMA-3B]{\includegraphics[width=0.33\linewidth]{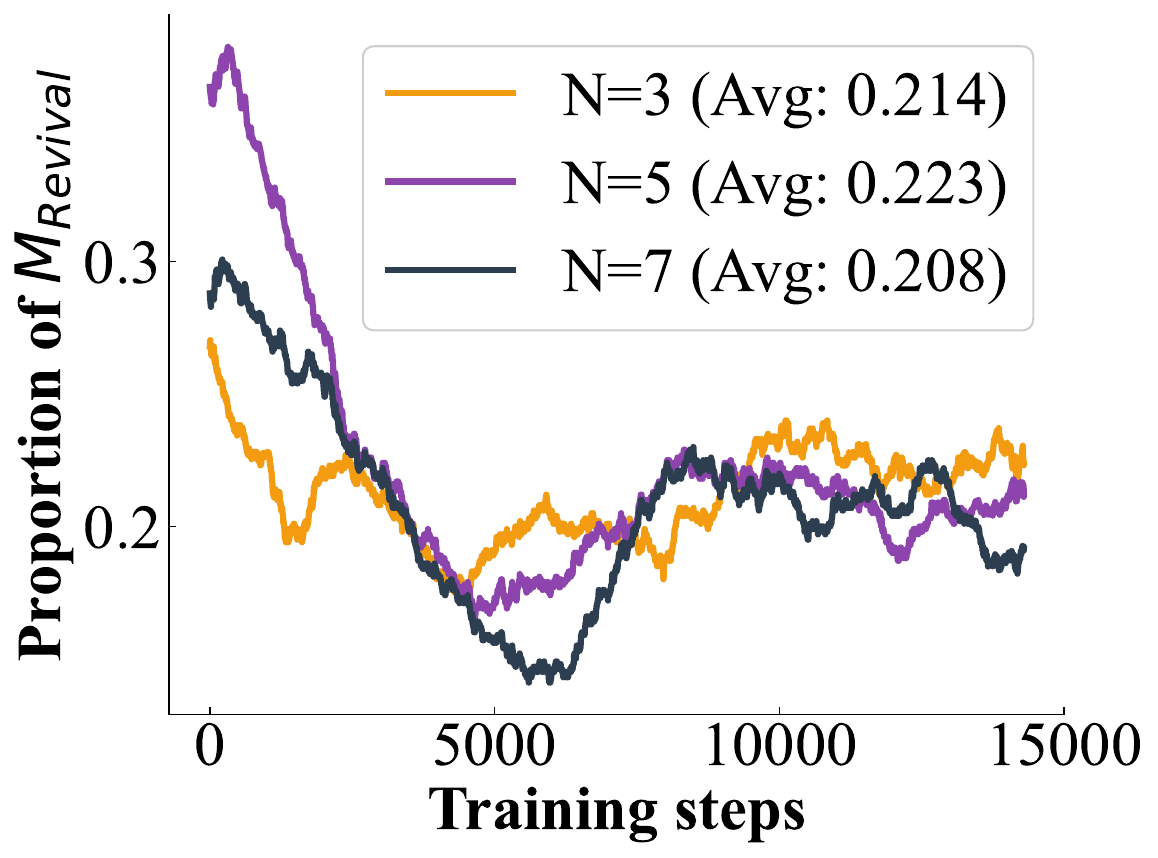}}
    \caption{Proportion of $M_{\mathrm{Revival}}$ activations for different numbers of MC dropout samples.}
    \label{fig:conf_trend2}
\end{figure*}

\subsection{Decoupling student confidence from teacher uncertainty}

A fundamental failure mode in knowledge distillation is that the student inherits not only the teacher's predictions but also its epistemic uncertainty, effectively learning to reproduce ambiguity rather than competence. This phenomenon is clearly visible when distillation is performed without selective rejection. Across instruction-following benchmarks, the post-distillation student's BALD distribution closely tracks the teacher's heavy-tailed uncertainty profile, exhibiting elevated mean uncertainty and limited separation from the teacher (Figure~\ref{fig:revival_bald}). Such behavior is consistent with the geometry of Forward KL and static divergence mixtures, which encourage the student to cover the teacher's full predictive mass, including regions dominated by epistemic noise.

In contrast, enabling \revivalname\ induces a systematic decoupling between teacher uncertainty and student confidence across all evaluated tasks. While teachers consistently exhibit broad, heavy-tailed BALD distributions, the student distilled with \revivalname\ maintains a sharply concentrated uncertainty profile that remains closer to its pre-distillation prior than to the teacher's ambiguity. This effect is reflected quantitatively in substantially larger teacher–student distributional separations, with Kolmogorov–Smirnov distances exceeding $0.5$ on benchmarks such as Vicuna and Dolly. These separations confirm that high-uncertainty teacher samples are actively filtered rather than partially absorbed.

Importantly, these results also expose the limitations of static divergence objectives. Although mode-seeking losses such as Skewed RKL and $\alpha$--$\beta$ divergence reduce variance in student predictions, they continue to enforce alignment on unreliable teacher tokens when applied uniformly. As a result, students trained under these objectives still inherit a non-trivial portion of the teacher's epistemic uncertainty. In contrast, \method\ combines adaptive divergence geometry with explicit epistemic rejection, allowing the student to ignore teacher supervision entirely in regimes where the teacher is unreliable. Consequently, the student's uncertainty distribution diverges more decisively from the teacher's, yielding sharper decision boundaries and higher distributional separation (Figures~\ref{fig:revival_bald_distillm2} and~\ref{fig:revival_bald_ab}).

Taken together, these observations clarify that effective distillation requires disentangling \emph{how} to learn (optimization geometry) from \emph{when} to learn (epistemic reliability). By enforcing this separation, \revivalname\ enables students that are both accurate and decisive, even when distilled from teachers that frequently hallucinate or exhibit high epistemic uncertainty.

\subsection{Evolution of student confidence over distillation iterations}

We further analyze the temporal evolution of student confidence by tracking the proportion of batches for which the \revivalname\ mask is activated, denoted by $M_{\mathrm{Revival}}$. By definition, $M_{\mathrm{Revival}}=1$ corresponds to the regime in which the teacher exhibits higher epistemic uncertainty than the student. Consequently, an increasing proportion of $M_{\mathrm{Revival}}$ over training directly reflects a growth in student epistemic confidence relative to the teacher.

Figure~\ref{fig:conf_trend} reports the evolution of $M_{\mathrm{Revival}}$ for three student architectures under different skip targets and scheduling strategies: constant (Con), increasing (Inc), and decreasing (Dec). Across all architectures, both constant and increasing schedules exhibit a clear upward trend in $M_{\mathrm{Revival}}$. Early in training, rejection is rare, indicating that the student is initially less confident than the teacher. As distillation proceeds, student epistemic uncertainty decreases, leading to a steady rise in $M_{\mathrm{Revival}}$ that eventually stabilizes near the target skip percentage.

This effect is most pronounced under the increasing schedule, where the gradual tightening of the rejection criterion induces a curriculum-like behavior. Early training benefits from broad teacher supervision, while later stages increasingly favor student confidence as competence improves. In contrast, decreasing schedules consistently suppress this effect. As the rejection criterion is relaxed over time, the student is re-exposed to high-uncertainty teacher supervision, resulting in limited growth in relative student confidence. This behavior mirrors the performance trends reported in Section~\ref{subsec:ablation_analysis}, where decreasing schedules yield inferior results.

Figure~\ref{fig:conf_trend2} examines the robustness of these dynamics with respect to the number of Monte Carlo dropout samples used to estimate BALD. While increasing the number of samples slightly smooths the trajectories, the qualitative behavior remains unchanged. In particular, the relative ordering of schedules is preserved across all architectures, confirming that the observed confidence evolution is not an artifact of stochastic estimation noise.

Overall, these results provide direct empirical evidence that the proposed framework induces a progressive increase in student confidence during distillation. Under constant and increasing schedules, the student transitions from reliance on teacher supervision to a regime in which it is epistemically more confident than the teacher on an expanding subset of data. This evolution aligns closely with the calibration dynamics formalized in Theorems~\ref{thm:conditional_calibration_overconfident} and~\ref{thm:conditional_sharpening_underconfident}, reinforcing our central claim that confidence-aware geometric adaptation combined with epistemic rejection yields students that are not only more accurate, but also increasingly decisive over the course of training.

\begin{figure}[!htb]
    \centering
    \subfloat[Chat alignment]{\includegraphics[width=0.33\linewidth]{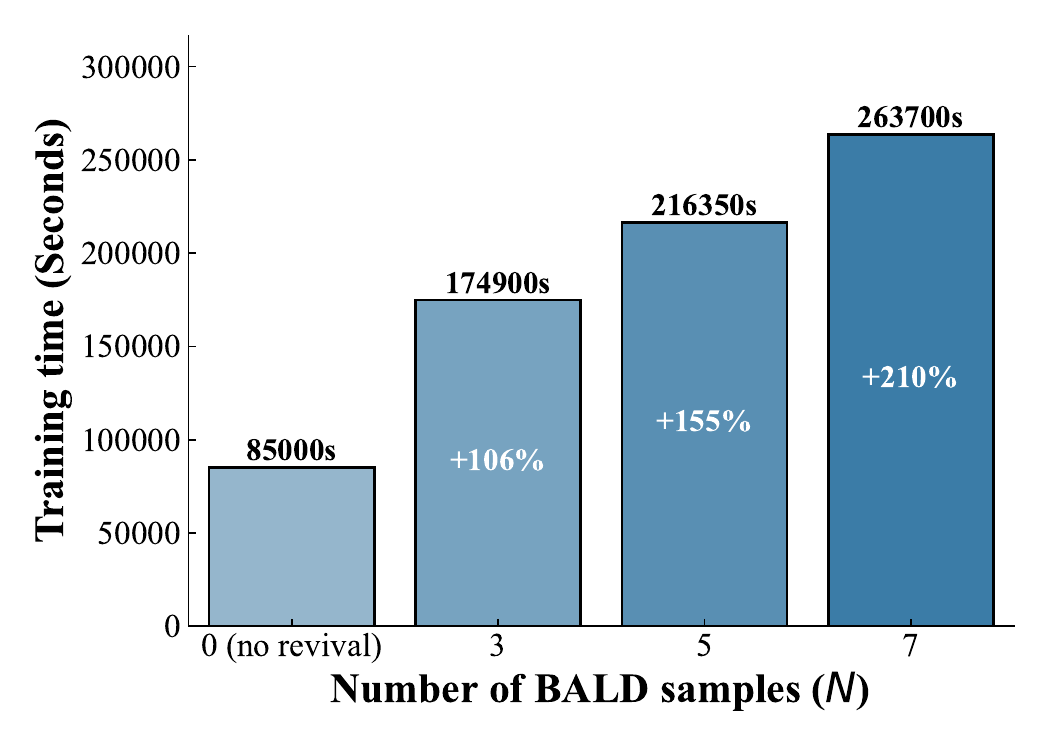}}
    \subfloat[Coding]{\includegraphics[width=0.33\linewidth]{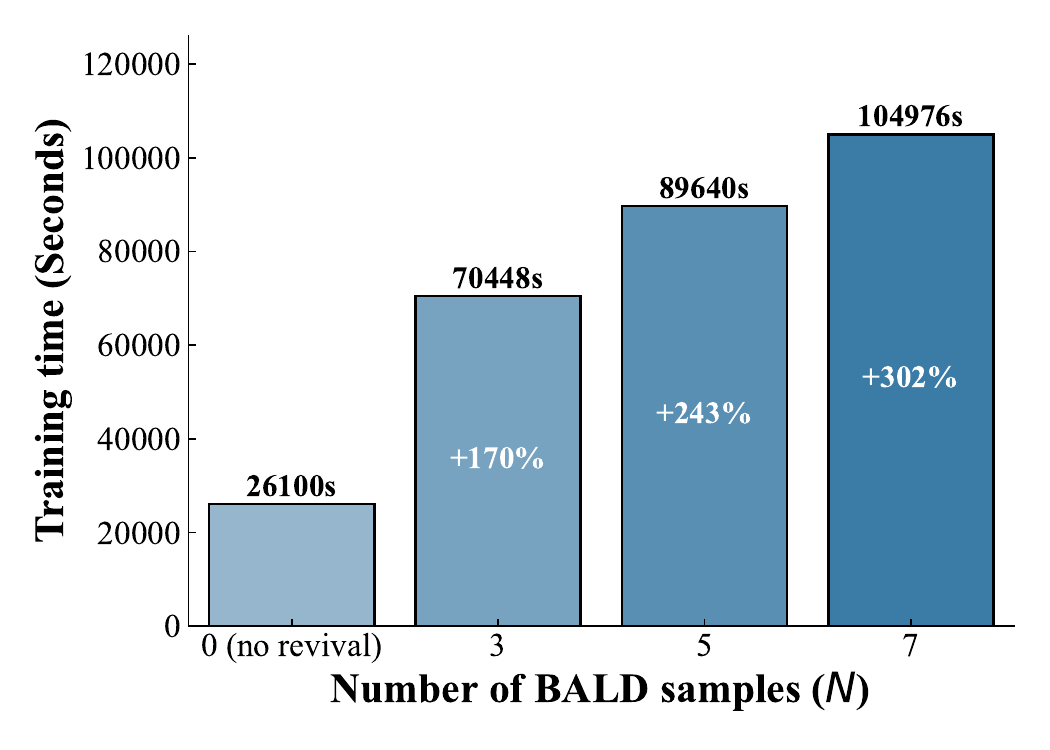}}
    \subfloat[Mathematical reasoning]{\includegraphics[width=0.33\linewidth]{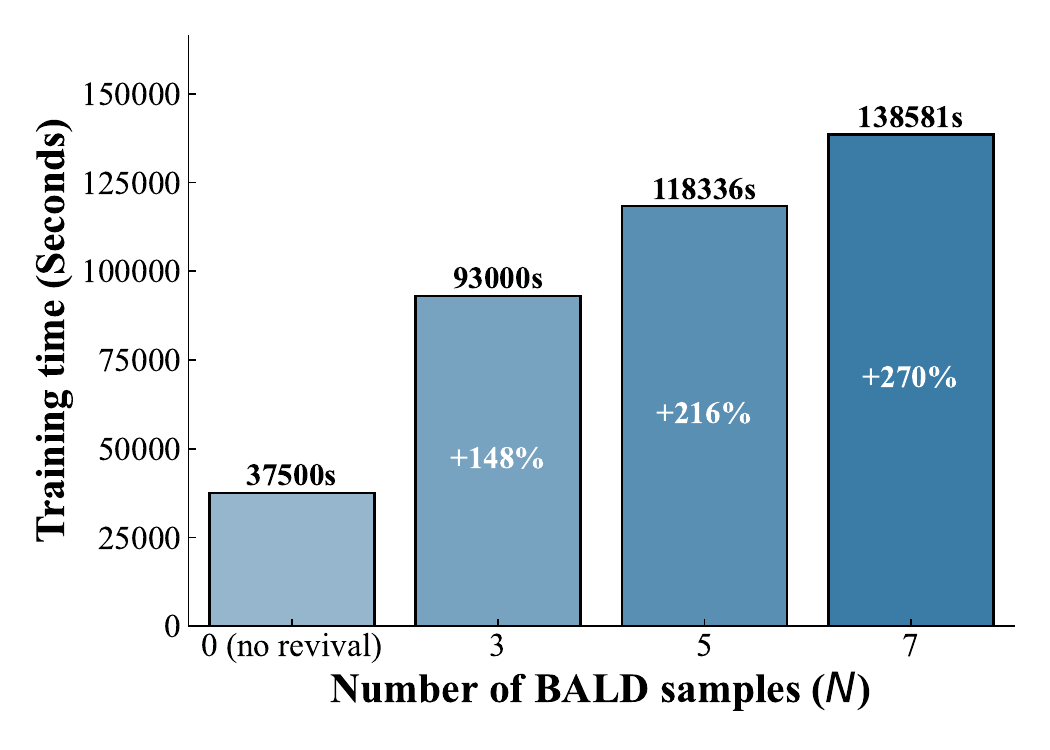}}
    \caption{\textbf{Runtime analysis} for different BALD sample configurations for specialized tasks.}
    \label{fig:runtime2}
\end{figure}

{\color{black}
\section{Calibration, Selective Prediction, and Selection-Bias Analyses}
\label{app:calibration_selection}

\subsection{Calibration (ECE) and selective prediction}
\label{app:calibration}

We report Expected Calibration Error \citep[ECE;][]{guo2017calibration} and risk--coverage / selective-prediction curves \citep{geifman2017selective} on the Dolly evaluation set for the GPT2 students. ECE must be read with care in this setting. Correctness is exact next-token match against a \emph{single} reference, so token accuracy is nearly identical for every objective ($0.43$--$0.47$), and the only quantity that really varies is how confident each student is. Ordered by average confidence, ECE increases monotonically across all six settings (Table~\ref{tab:ece}).

\begin{table}[h]
\centering
\color{black}
\caption{ECE on Dolly for the GPT2 students, ordered by average confidence. With a single-reference target, ECE tracks confidence almost one-for-one and is largely a proxy for sharpness.}
\label{tab:ece}
\begin{tabular}{lcc}
\toprule
\textbf{Objective} & \textbf{Avg.\ confidence} & \textbf{ECE} \\
\midrule
Forward KL & 0.67 & 0.22 \\
\method\ (softest configuration) & 0.68 & 0.28 \\
$\alpha$--$\beta$ divergence & 0.75 & 0.28 \\
Reverse KL & 0.77 & 0.31 \\
Skewed-RKL & 0.79 & 0.31 \\
\method\ (default temperature) & 0.79 & 0.34 \\
\bottomrule
\end{tabular}
\end{table}

\noindent ECE tracks average confidence monotonically, from Forward KL ($0.67$, ECE $0.22$) up to default \method\ ($0.79$, ECE $0.34$). On a single-reference target ECE is therefore largely a proxy for sharpness, and it penalizes exactly the precision-oriented, mode-seeking behavior our method is designed to produce. Forward KL reaches the lowest ECE only by being under-confident --- and it is also the weakest divergence on quality (lowest mean ROUGE-L in Table~\ref{tab:main_results}) --- while \method\ matches that same ECE level once its distribution is softened (softest configuration, confidence $0.68$). We therefore \emph{do not} claim that \method\ is the best-calibrated method by ground-truth ECE, and we do not regard ground-truth ECE as the right yardstick for a method whose aim is to keep the student sharp where it is competent.

The calibration property we actually establish is different. Theorem~\ref{thm:conditional_calibration_overconfident} states that the student's entropy converges toward the \emph{teacher's}, not toward a ground-truth-optimal value; the direct evidence is the uncertainty decoupling in Figures~\ref{fig:motivation}a and~\ref{fig:revival_bald} (a $7\%$ post-distillation KS drift on Dolly for \method\ against $0.4\%$ for Skewed-RKL). Separately, the risk--coverage curves are monotone for every objective, so a student's confidence is a usable selective-prediction signal regardless of its absolute scale.

\subsection{\revivalname\ does not preferentially reject hard examples or reduce diversity}
\label{app:selection_bias}

A quantile-scheduled rejection rule could in principle bias training by discarding hard examples or collapsing response diversity. We test this directly by comparing the region where \revivalname\ fires (teacher uncertain, student confident) against the complementary region on OpenLLaMA-Dolly, using LLM-judged question hardness and response diversity (Table~\ref{tab:selection_bias}). Neither difference is significant: \revivalname\ is not discarding the hard questions, and it does not measurably reduce the diversity of the student's responses.

\begin{table}[h]
\centering
\color{black}
\caption{Selection-bias check on OpenLLaMA-Dolly. LLM-judged question hardness and response diversity in the region where \revivalname\ fires vs.\ the complementary region. Neither difference is significant.}
\label{tab:selection_bias}
\begin{tabular}{lccc}
\toprule
\textbf{Measure (LLM-judged)} & \textbf{Region where \revivalname\ fires} & \textbf{Opposite region} & \textbf{Mann--Whitney $p$} \\
\midrule
Question hardness & 2.87 & 2.98 & 0.37 \\
Response diversity & 3.96 & 3.99 & 0.63 \\
\bottomrule
\end{tabular}
\end{table}
}

\end{document}